\def\eqref#1{equation~\ref{#1}}
\def\1{\bm{1}}
\DeclareMathAlphabet{\mathsfit}{\encodingdefault}{\sfdefault}{m}{sl}
\SetMathAlphabet{\mathsfit}{bold}{\encodingdefault}{\sfdefault}{bx}{n}
\def\gA{{\mathcal{A}}}
\def\gB{{\mathcal{B}}}
\def\gD{{\mathcal{D}}}
\def\gI{{\mathcal{I}}}
\def\gL{{\mathcal{L}}}
\def\gM{{\mathcal{M}}}
\def\gN{{\mathcal{N}}}
\def\gQ{{\mathcal{Q}}}
\def\gR{{\mathcal{R}}}
\def\gS{{\mathcal{S}}}
\def\gT{{\mathcal{T}}}
\def\gU{{\mathcal{U}}}
\def\gX{{\mathcal{X}}}
\newcommand{\R}{\mathbb{R}}
\DeclareMathOperator*{\argmax}{arg\,max}
\DeclareMathOperator*{\argmin}{arg\,min}
\title{Spectral Bellman Method: \\ Unifying Representation and Exploration in RL}
\author{%
  Ofir Nabati\textsuperscript{1,4}\thanks{ofirnabati@gmail.com}, Bo Dai\textsuperscript{2}, Shie Mannor\textsuperscript{1,3},  Guy Tennenholtz\textsuperscript{4} \\
  \textsuperscript{1} Technion,  \textsuperscript{2} Google DeepMind,  \textsuperscript{3} Nvidia Research,  \textsuperscript{4} Google Research
}
\begin{document}

\maketitle

\begin{abstract}
Representation learning is critical to the empirical and theoretical success of reinforcement learning. However, many existing methods are induced from model-learning aspects, misaligning them with the RL task in hand. This work introduces the Spectral Bellman Method, a novel framework derived from the Inherent Bellman Error (IBE) condition. It aligns representation learning with the fundamental structure of Bellman updates across a \textit{space} of possible value functions, making it directly suited for value-based RL. Our key insight is a fundamental spectral relationship: under the zero-IBE condition, the transformation of a \textit{distribution} of value functions by the Bellman operator is intrinsically linked to the feature covariance structure. This connection yields a new, theoretically-grounded objective for learning state-action features that capture this Bellman-aligned covariance, requiring only a simple modification to existing algorithms. We demonstrate that our learned representations enable structured exploration by aligning feature covariance with Bellman dynamics, improving performance in hard-exploration and long-horizon tasks. Our framework naturally extends to multi-step Bellman operators, offering a principled path toward learning more powerful and structurally sound representations for value-based RL.
\end{abstract}

\setlength{\abovedisplayskip}{2pt}
\setlength{\abovedisplayshortskip}{2pt}
\setlength{\belowdisplayskip}{2pt}
\setlength{\belowdisplayshortskip}{1pt}
\setlength{\jot}{1pt}
\setlength{\floatsep}{1ex}
\setlength{\textfloatsep}{1ex}

\vspace{-2mm}
\section{Introduction}
\vspace{-2mm}

Efficient reinforcement learning (RL) in complex environments hinges on two critical challenges: learning effective representations and performing efficient exploration. While many approaches tackle representation learning \citep{laskin2020curl, schwarzer2021data, oh2015action, zhang2021learning, nabati2023representation, barreto2017successor, zhang2022making} and exploration as separate problems, a deeper synergy is needed, particularly for control tasks where features must support both accurate value estimation and strategic data gathering. This work introduces a novel framework to learn representations that inherently unify these aspects, paving the way for more powerful and sample-efficient RL agents.

Our approach is rooted in the theory of Inherent Bellman Error (IBE) \citep{zanette2020learning}. The IBE quantifies the suitability of a feature space for value-based RL by measuring the minimum error in representing Bellman updates. A zero IBE condition, generalizing Linear MDPs \citep{jin2020provably}, is highly desirable as it implies the function space is closed under the Bellman operator. However, directly discovering features that satisfy this condition \emph{a priori} remains a significant challenge through a $\min$-$\max$-$\min$ optimization~\citep{modi2024model}, especially with general function approximator. The challenge impedes the practical applications of IBE-based representation learning.

This paper introduces the \textbf{Spectral Bellman Method (SBM)}, a framework which makes low-IBE representation learning tractable and addresses exploration upon the learned representation.
Our approach uses a fundamental spectral relationship of IBE: when the IBE is zero, the transformation of a \textit{distribution} of value functions by the Bellman operator is intrinsically linked to the covariance structure of the features themselves. 
This connection reveals that features aligned with Bellman updates naturally possess a covariance structure that can be exploited for structured exploration. 
Leveraging this insight, we derive a novel, theoretically grounded objective function. Our method learns state-action features whose covariance inherently captures this Bellman-aligned structure, while avoiding the complicated optimization. The learned representations not only enhance value approximation but also naturally facilitate structured exploration strategies. Specifically, we use Thompson Sampling (TS) driven by the learned representation for efficient exploration.

Our contributions are as follows. {\bf (1)} We propose a novel representation learning framework, the Spectral Bellman Method, which unifies representation learning with exploration, motivated by the structural implications of the Inherent Bellman Error. Our method requires only a simple modification to existing value-based RL algorithms. {\bf (2)} We demonstrate the effectiveness of the Spectral Bellman Method and its learned representations on the Atari benchmark, including a subset of hard exploration games. {\bf (3)} The extension of this representation learning approach to capture the structure of powerful multi-step Bellman operators and targets.

\begin{figure}[t!]
\label{fig:sbm}
\centering
\includegraphics[trim={5cm 0cm 0cm 0cm}, clip, width=1.0\linewidth]{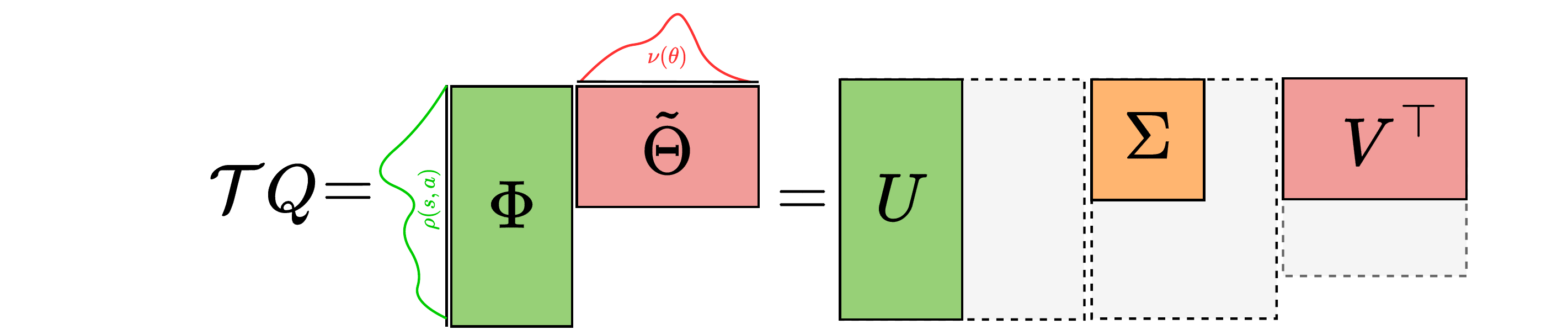}
 \caption{  Spectral Representation of the optimal Bellman operator. Under zero IBE condition, the linear representation is equivalent to an SVD decomposition of rank $d$ with a singular value matrix $\Sigma$.}
 \label{fig:spectral_bellman}
\end{figure}

\section{Background: The Zero-IBE Condition and Efficient Exploration}
\label{sec:background}

We consider a Markov Decision Process (MDP) \citep{puterman1990markov} $(\gS, \gA, \gamma, r, P)$ with state space $\gS$, action space $\gA$, discount $\gamma \in [0, 1)$, reward $r: \gS \times \gA \rightarrow \R$ ($|r(s,a)| \le R_{\max}$), and transition kernel $P: \gS \times \gA \mapsto \Delta(\gS)$. The $Q$-function is $Q^\pi(s,a) = \expect*{\pi, P}{\sum_{t=0}^\infty \gamma^t r(s_t,a_t) | s_0 = s, a_0 = a}$. The goal is to find an optimal policy $\pi^*(s) \in \arg\max_{\pi} \expect*{a \sim \pi}{Q^\pi(s,a)}$.

The optimal Q-function $Q^*(s,a)$ satisfies $Q^*(s,a) = r(s,a) + \gamma \expect*{s' \sim P(\cdot|s,a)}{\max_{a' \in \gA} Q^*(s',a')}.$ The optimal Bellman operator $\gT$ on $Q: \gS \times \gA \rightarrow \R$ is defined by
\begin{align}
\gT Q(s,a) = r(s,a) + \gamma \expect*{s' \sim P(\cdot|s,a)}{\max_{a' \in \gA} Q(s',a')}.
\tag{Optimal Bellman Operator}
\end{align}
This is a special case of the policy Bellman operator $\gT^\pi Q(s,a) = r(s,a) + \gamma \expect*{s' \sim P(\cdot|s,a), a' \sim \pi(\cdot|s')}{ Q(s',a')}.$ Indeed, for a greedy policy $\pi$ w.r.t $Q$, $\gT$ and $\gT^\pi$ are equivalent.

\subsection{The Zero Inherent Bellman Error Condition}
\label{sec:zero_ibe}

$Q^*$ is often approximated linearly with features $\phi: \gS \times \gA \rightarrow \R^d$ as $Q_{\theta}(s,a) = \phi(s,a)^\top \theta$ for $\theta \in \R^d$. We assume $\|\phi(s,a)\|_2 \leq 1$. 

\begin{definition}[Function Space and Parameter Bounds]
Let $\phi: \gS \times \gA \rightarrow \R^d$ be a feature map. We define the linear function space for Q-functions as:
$$
\gQ_\phi = \brk[c]{Q_{\theta}(s,a) = \phi(s,a)^\top \theta \mid \theta \in \gB_\phi}, \quad \gB_\phi = \brk[c]{\theta \in \R^d \mid \sup_{(s,a) \in \gS \times \gA} |\phi(s,a)^\top\theta| \leq D},
$$
for  a bounded parameter set $\gB_\phi$, typically defined such that the resulting Q-values are bounded.
for some maximum value $D$ (e.g., $D = R_{max}/(1-\gamma)$).
\end{definition}

Effective exploration for policy learning and data collection is crucial in RL, especially with large state/action spaces, where good representations are key \citep{jin2020provably, zanette2020learning, azizzadenesheli2018efficient}. While policy learning has advanced \citep{schulman2017proximal, kapturowski2018recurrent, badia2020agent57}, representation for exploration remains challenging \citep{laskin2020curl, nabati2023representation, ren2023spectraldecompositionrepresentationreinforcement}. Our work offers a unified framework based on low Inherent Bellman Error (IBE) and spectral theory \citep{ren2023spectraldecompositionrepresentationreinforcement}. We define IBE below.

\begin{definition}[Inherent Bellman Error (IBE), \citep{zanette2020learning}]
\label{def:ibe}
Given an MDP and a feature map $\phi$, the Inherent Bellman Error (IBE) is defined as:
$$
\gI_\phi := \sup_{Q \in \gQ_\phi} \inf_{\tilde{Q} \in \gQ_\phi} \norm{ \gT Q - \tilde{Q} }_{\infty}
= \sup_{\theta \in \gB_\phi} \inf_{\tilde{\theta} \in \gB_\phi} \norm{ \gT Q_{\theta} - Q_{\tilde{\theta}} }_{\infty}.
$$
\end{definition}

If $\gI_\phi=0$, $\gQ_\phi$ is closed under $\gT$ (up to projection onto $\gB_\phi$). $\gI_\phi$ quantifies how well $\gT$ maps $\gQ_\phi$ to itself. Zero IBE means any $\gT Q_\theta$ for $Q_\theta \in \gQ_\phi$ is perfectly representable by some $Q_{\tilde{\theta}} \in \gQ_\phi$. This generalizes Linear MDPs \citep{jin2020provably}, where $\gT$ maps any Q-function into $\gQ_\phi$.

\vspace{-2mm}
\subsection{Facilitating Efficient Exploration with Low-IBE Features}
\label{sec:exp_ibe}
\vspace{-2mm}

A representation with zero or low IBE facilitates efficient exploration. With expected feature $\phi_{\pi} = \expect*{(s,a)\sim d^\pi}{\phi(s,a)}$ under policy $\pi$ (occupancy $d^\pi$), effective exploration aims to reduce uncertainty, e.g., by minimizing $\norm{\phi_\pi}_{\Sigma^{-1}}$ across policies. \citet{zanette2020provably} define maximum uncertainty:
\begin{definition}[Max Uncertainty \citep{zanette2020provably}]
\label{eq:max_unc}
    $\gU(\sigma) := \max_{\pi, \norm{\xi}_{\Sigma} \leq \sqrt{\sigma}} \phi_{\pi}^\top \xi := \max_\pi \sqrt{\sigma} \norm{\phi_{\pi}}_{\Sigma^{-1}}$
\end{definition}
Informally, our goal is to reduce maximum uncertainty. This can be done by overestimating it via sampling exploration noise $\xi \in \R^d \sim \gN(0, \sigma \Sigma^{-1})$ per rollout (i.e., Thompson Sampling), where $\norm{\xi}_2 \leq \frac{R_{max}}{1 - \gamma}$ \citep{zanette2020provably}. 

Motivated by the IBE, in what follows we detail a method for learning low-IBE representations and integrate them with exploration methods like Thompson Sampling (TS) to improve RL performance.

\section{Spectral Bellman Method}
\label{sec:spectral}

This section introduces the Spectral Bellman Method (SBM), a framework designed to learn representations that satisfy the zero-IBE condition (\Cref{def:ibe}). While the zero-IBE condition implies that the function space is closed under the Bellman operator, directly enforcing this condition leads to computationally intractable min-max optimization problems. Our analysis reveals a fundamental connection between the spectral properties of the optimal Bellman operator and the covariance structure of the learned features under orthogonality assumptions. By leveraging this connection, inspired by the power iteration method, we derive a new learning objective that overcomes the limitations of direct Bellman error minimization.

\subsection{Zero-IBE Objective}

Consider the mapping $\tilde{\theta}(\theta): \gB_\phi \rightarrow \gB_\phi$, that maps a parameter $\theta \in \gB_\phi$ into his correspond Bellman minimizer $\tilde \theta \in \gB_\phi$ (noting that for the optimal Q-function parameter, this is the identity map):
$$
\tilde \theta (\theta) := \argmin_{\tilde \theta \in \gB_\phi} \| \gT Q_\theta - Q_{\tilde \theta} \|_\infty
$$

Our goal is to learn features $\phi(s,a) \in \R^d$ and a corresponding mapping $\tilde \theta(\theta)$ such that the function space $\gQ_\phi$ is closed under the Bellman operator, i.e., $\gI_\phi = 0$. However, a straightforward idea for minimizing the IBE leads to a complicated optimization~\citep{modi2024model}, and thus, difficult to be implemented for practical applications. 

Given a set of observations $\brk[c]{s_i,a_i,\theta_i}_{i=1}^N$ sampled from distributions $\rho(s,a)$ over state-action pairs and $\nu(\theta)$ over input Q-function parameters, IBE condition implies the minimization of mean squared error (MSE) between the Bellman backup of $Q_\theta$ and its representation in the learned feature space:
\begin{align}
\label{eq:mse}
    \gL_{MSE}(\phi, \tilde \theta) =  \expect*{(s,a) \sim \rho(s,a), \theta \sim \nu(\theta)}{\norm{\gT Q_\theta(s,a) - \phi(s,a)^\top\tilde \theta (\theta)}^2_2}.
\end{align}
However, even the minimization of \Cref{eq:mse} is still challenging. First, the Bellman operator $\gT$ is highly non-linear in $Q_\theta$ (which depends on the learned $\phi$ and $\theta$) and the task parameters $\theta$, complicating the joint optimization landscape and risking suboptimal minima or poor features. Second, this MSE objective neither leverages the underlying spectral properties of $\gT Q_\theta$ nor inherently promotes desirable structural properties in $\phi$ and $\tilde{\theta}$. To address these limitations, we next introduce a spectral analysis of the Bellman operator to derive a more principled learning objective.

The core distinction of SBM is its objective derived from the zero IBE condition, which seeks closure of $\gQ_\phi$ under the Bellman operator across a \textit{distribution} of parameters. This contrasts with: (1) methods focusing only on the optimal Q-function \citep{mnih2013playing, kapturowski2018recurrent}, which may be unfeasible to learn directly \citep{du2019good}; (2) Learn a successor representation or Laplace methods  \citep{mahadevan2005proto, barreto2017successor, touati2021learning, farebrother2023proto}, which are task-agnostic and not task-specific as our method.

\subsection{Bellman Spectral Decomposition under Zero IBE}

To overcome the challenges of directly minimizing \Cref{eq:mse}, we analyze the spectral properties of the optimal Bellman operator under the zero-IBE condition. This analysis will lead us to an algorithm inspired by the power iteration method for singular value decomposition~\citep{golub2013matrix,trefethen2022numerical}, designed to efficiently learn a zero-IBE representation.

For conceptual clarity, let us temporarily adopt a matrix perspective, assuming finite state-action spaces ($|\gS \times \gA|=n$) and a finite parameter space ($|\gB_\phi|=m$). Let $\kappa: [n] \rightarrow \gS \times \gA$ be a bijection from indices to state-action pairs. We denote the feature matrix as $\Phi \in \R^{n \times d}$ (where the $i$-th row is $\phi(\kappa(i))^\top$) and the matrix of input parameters as $\Theta = [\theta_1, \ldots, \theta_m] \in \R^{d \times m}$. A Q-function $Q_\theta$ is then $\Phi\theta$, and the collection of Q-functions for all $\theta_j \in \Theta$ is $Q = \Phi \Theta \in \R^{n \times m}$.
Under the zero-IBE condition, for every $\theta \in \gB_\phi$, there exists a $\tilde{\theta}(\theta)$ such that $\gT(\Phi\theta) = \Phi\tilde{\theta}(\theta)$. In matrix form, this means $\gT Q = \Phi \tilde{\Theta}$, where $\tilde{\Theta} = [\tilde{\theta}(\theta_1), \ldots, \tilde{\theta}(\theta_m)] \in \R^{d \times m}$.
To incorporate the influence of the sampling distributions $\rho(s,a)$ and $\nu(\theta)$, we augment the feature and parameter mappings:
\begin{align}
\label{eq:aug_rep}
    \bar{\phi}(s,a) := \sqrt{\rho(s,a)} \phi(s,a), \quad \bar{ \theta}(\theta) := \tilde \theta(\theta) \sqrt{\nu(\theta)}.
\end{align}
Consequently, the Bellman operator acting on a distribution of Q-functions can be thought of through an augmented operator, $\overline{\gT Q}_\theta(s,a) = \bar\phi(s,a)^\top \bar \theta(\theta)$. In essence, the augmentation defined in \cref{eq:aug_rep} enables us to treat the distribution-weighted problem as a standard spectral decomposition on the augmented space.

We also define the feature covariance matrix $\Lambda_1 := \expect*{(s,a) \sim \rho(s,a)}{\phi(s,a) \phi(s,a)^\top}$ and the post-Bellman parameter covariance matrix $\Lambda_2 := \expect*{\theta \sim \nu(\theta)}{\tilde{\theta}(\theta) \tilde{\theta}(\theta)^\top}$.

A key insight reveals a deep structural connection under the zero-IBE condition. Informally, when the Bellman operator $\gT$ transforms a distribution of Q-functions (weighted by $\rho$ and $\nu$), the resulting (weighted) feature matrix $\Phi$ and (weighted) post-Bellman parameter matrix $\tilde{\Theta}$ correspond to the singular vectors of this transformation. 

Our spectral Bellman method aim to find a representation that is aligned with the singular vectors of the (augmented) Bellman operator. Namely, $\Lambda_1$ and $\Lambda_2$ are diagonal matrices, correspond to the singular values of the Bellman operator.
This structural alignment is crucial and forms the basis of our method. We refer the reader to  \Cref{apndx:thm_svd} for a formal statement and proof of this connection.

This spectral relationship naturally motivates an approach analogous to the power iteration method for finding dominant eigenvectors/singular vectors~\citep{golub2013matrix,trefethen2022numerical}. The following proposition captures identities central to such an iterative process (see \Cref{apndx:prop_pm} for proof).
\vspace{-1mm}
\begin{proposition}
\label{prop:inner_prod}
    The following identities hold:
    $$
    \brk[a]{\overline{\gT Q}_\theta(\cdot), \bar \phi(\cdot)} = \Lambda_1  \bar \theta(\theta), \quad \quad \brk[a]{\overline{\gT Q}_\cdot(s,a), \bar \theta(\cdot)} =  \Lambda_2 \bar \phi(s,a).
    $$
\end{proposition}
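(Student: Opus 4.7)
The plan is to invoke the zero-IBE hypothesis to obtain a bilinear factorization of $\overline{\gT Q}_\theta(s,a)$ and then reduce each identity to computing the appropriate second-moment integral of the augmented feature or parameter map. Under zero IBE, for every $\theta \in \gB_\phi$ there exists $\tilde\theta(\theta)$ with $\gT Q_\theta(s,a) = \phi(s,a)^\top \tilde\theta(\theta)$, and applying the augmentation in \Cref{eq:aug_rep} immediately yields
$$\overline{\gT Q}_\theta(s,a) = \sqrt{\rho(s,a)\nu(\theta)}\,\gT Q_\theta(s,a) = \bar\phi(s,a)^\top \bar\theta(\theta).$$
This factorization, separated in its $(s,a)$ and $\theta$ dependence, is the only structural fact the rest of the argument uses.

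For the first identity, I would interpret $\langle \overline{\gT Q}_\theta(\cdot), \bar\phi(\cdot)\rangle$ as the $L^2$ inner product in the $(s,a)$ variable, which yields a vector in $\R^d$ by pairing the scalar-valued $\overline{\gT Q}_\theta$ against the vector-valued $\bar\phi$. Substituting the bilinear form and pulling the $(s,a)$-independent factor $\bar\theta(\theta)$ out of the integral gives
$$\langle \overline{\gT Q}_\theta(\cdot), \bar\phi(\cdot)\rangle = \Big(\int \bar\phi(s,a)\bar\phi(s,a)^\top \, d(s,a)\Big)\, \bar\theta(\theta),$$
and since $\bar\phi(s,a)\bar\phi(s,a)^\top = \rho(s,a)\,\phi(s,a)\phi(s,a)^\top$, the matrix in parentheses is precisely $\mathbb{E}_{(s,a)\sim\rho}[\phi(s,a)\phi(s,a)^\top] = \Lambda_1$. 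The second identity follows by a symmetric computation: interpret $\langle \overline{\gT Q}_\cdot(s,a), \bar\theta(\cdot)\rangle$ as an $L^2$ inner product over $\theta$, use the scalar–vector commutation $\overline{\gT Q}_\theta(s,a)\,\bar\theta(\theta) = \bar\theta(\theta)\bar\theta(\theta)^\top \bar\phi(s,a)$, and note that $\int \bar\theta(\theta)\bar\theta(\theta)^\top \, d\theta = \int \nu(\theta)\tilde\theta(\theta)\tilde\theta(\theta)^\top \, d\theta = \Lambda_2$.

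There is no substantive obstacle here; all of the content is carried by the zero-IBE factorization, and what remains is bookkeeping. The only item to verify carefully is that the $\sqrt{\rho}$ and $\sqrt{\nu}$ weights absorbed into the augmented mappings recombine through the outer products to exactly the densities appearing in the definitions of $\Lambda_1$ and $\Lambda_2$, so that no measure or Jacobian correction is needed and the bar-free covariance matrices appear on the right-hand sides as stated.
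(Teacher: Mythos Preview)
Your proposal is correct and follows essentially the same approach as the paper: invoke the zero-IBE factorization $\overline{\gT Q}_\theta(s,a) = \bar\phi(s,a)^\top \bar\theta(\theta)$, then expand each inner product and identify the resulting second-moment integral as $\Lambda_1$ or $\Lambda_2$. The only cosmetic difference is that the paper unwinds the $\sqrt{\rho}$ and $\sqrt{\nu}$ factors explicitly in the intermediate step rather than keeping them packaged inside $\bar\phi\bar\phi^\top$ and $\bar\theta\bar\theta^\top$ as you do.
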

\vspace{-2mm}

\begin{algorithm}[t!]
\caption{Spectral Bellman Power Method}
\begin{algorithmic}[1]
\label{alg: mini-batch em}
\STATE Initialize $\tilde \theta_0, \phi_0$
\FOR{$t = \{1,2, \hdots\}$}
    \STATE Compute $\Lambda_{1,t} = \expect*{\rho(s,a)}{\phi_t(s,a) \phi_t(s,a)^\top}, \quad \Lambda_{2,t} = \expect*{\nu(\theta)} {\tilde \theta_t \tilde \theta_t^\top} $.
    \STATE Find $\phi$ and $\tilde \theta$ which satisfy the following constraints:
\begin{align}
         &\Lambda_{2,t} \bar \phi(s,a) = \brk[a]{\overline{\gT Q}_{\cdot}(s,a), \bar \theta_t(\cdot)}, \quad (s,a) \in \gS \times \gA \nonumber \\
      &\Lambda_{1,t} \bar \theta(\theta) = \brk[a]{\overline{\gT Q}_\theta(\cdot), \bar \phi_t(\cdot)}, \quad \theta \in \gB_{\phi_t} \nonumber \\
      & \brk[a]{\bar \phi_i (\cdot), \bar \phi_j(\cdot)} = 0, \quad  \brk[a]{\bar \theta_i (\cdot), \bar \theta_j(\cdot)} = 0, \quad i \neq j \in [d].
      \label{eq:pm_obj}
\end{align}

\ENDFOR
\end{algorithmic}   
\vspace{-1mm}
\end{algorithm}

The above identities closely resemble the update rules of a power iteration algorithm \citep{sanger1988optimal, xie2015scale, guo2025representation}, but for eigenfunction learning for symmetric operator in semi-supervised learning.
This suggests an alternating optimization strategy for learning $\phi$ and $\tilde{\theta}$. \Cref{alg: mini-batch em} outlines this iterative procedure. In each round, covariance matrices are computed, followed by solving linear equations to extract the updated representation $\phi$ and parameters $\tilde{\theta}$. This spectral method offers significant advantages over direct minimization MSE loss (\Cref{eq:mse}), as we will discuss in~\Cref{subsec:mse_comp}. The alternating updates can stabilize the optimization by decomposing the problem into simpler, often convex, subproblems. Moreover, by leveraging the power iteration structure, our method aims for the faster convergence rates characteristic of such techniques, leading more efficiently to the desired representation.

\subsection{Practical SBM Learning}

While \Cref{alg: mini-batch em} provides a conceptual framework based on power iteration, directly solving the system of linear equations in \cref{eq:pm_obj} at each step can be computationally prohibitive, especially in environments with large or infinite state-action spaces. To make this approach practical, we formulate an objective function whose minimization effectively performs these power iteration steps. 

Let $\Lambda_{1,t} = \expect*{\rho(s,a)}{\phi_t(s,a)\phi_t(s,a)^\top}$ and $\Lambda_{2,t} =  \expect*{\nu(\theta)}{\tilde \theta_t(\theta) \tilde \theta_t(\theta)^\top}$ denote the empirical covariance matrices at iteration $t$. The SBM objective is given by
\begin{align}
    \label{eq:bellman_optimization}
    \tag{SBM Loss}
    \gL(\phi,\tilde \theta; \nu, \rho) = \gL_1(\phi) + \gL_2(\tilde \theta) \quad s.t \quad \phi \in \gM_{\gS \times \gA}^\rho, \tilde \theta \in \gM_{\gB_\phi}^\nu
\end{align}
where $\gL_1(\phi) = \expect*{\nu(\theta)\rho(s,a)}{\| \phi(s,a) \|_{\Lambda_{2,t}}^2 -2 \gT Q_{\theta, t}(s,a) \phi(s,a)^\top  \tilde \theta_{t}(\theta) }$ is the representation loss; $\gL_2(\tilde \theta) = \expect*{\nu(\theta)\rho(s,a)}{\| \tilde \theta(\theta) \|_{\Lambda_{1,t}}^2 - 2 \gT Q_{\theta, t}(s,a) \tilde \theta(\theta) ^\top \phi_t(s,a)}$ is the parameter objective; and 
$$
\gM^\mathbb{P}_{\gX} = \brk[c]{f : \gX \rightarrow \R^d \quad | \quad \expect*{x \sim \mathbb{P}}{f_i(x)f_j(x)} = 0 \quad \forall i \neq j \in [d]}
$$
is the orthogonal function space of functions over $\gX$ w.r.t probability $\mathbb{P} : \gX \rightarrow [0,1]$.

The term $\gL_1$ updates $\phi$ to align with the Bellman-transformed Q-values, using the current estimate of parameter covariance $\Lambda_{2,t}$ and parameters $\tilde{\theta}_t$. The term $\gL_2$ updates $\tilde{\theta}$ to best represent the Bellman-transformed Q-values given the current features $\phi_t$ and their covariance $\Lambda_{1,t}$. 
Indeed, minimizing the objective in \ref{eq:bellman_optimization} is equivalent to applying the power method. This is formally shown by the following proposition.

\begin{proposition}
\label{prop:pm_objective}
For any $t$, a solution $\theta^*_t, \phi^*_t$ to the power method objective in \Cref{eq:pm_obj} if and
only if it is a minimizer of the \ref{eq:bellman_optimization}.
\end{proposition}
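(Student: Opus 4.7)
The plan is to recognize that the SBM objective decouples as $\gL = \gL_1(\phi) + \gL_2(\tilde\theta)$ --- each summand depending on only one of the two variables, since $\phi_t, \tilde\theta_t, \Lambda_{1,t}, \Lambda_{2,t}$ are held fixed at iteration $t$ --- and that each summand is a convex quadratic. Indeed, up to an additive $\phi$-independent constant, $\gL_1(\phi) = \expect*{\rho(s,a)\nu(\theta)}{(\phi(s,a)^\top\tilde\theta_t(\theta) - \gT Q_{\theta,t}(s,a))^2}$, and symmetrically for $\gL_2$. The orthogonality constraints $\phi \in \gM^\rho_{\gS\times\gA}$ and $\tilde\theta \in \gM^\nu_{\gB_\phi}$ translate, under the weighting $\bar\phi = \sqrt{\rho}\,\phi$ and $\bar\theta = \sqrt{\nu}\,\tilde\theta$, into exactly the conditions $\brk[a]{\bar\phi_i, \bar\phi_j} = 0$ and $\brk[a]{\bar\theta_i, \bar\theta_j} = 0$ of \Cref{eq:pm_obj}, so the feasible sets of the two problems coincide.

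Next, I would take the pointwise Gateaux derivative of $\gL_1$ at $\phi$, which vanishes in every direction iff $\Lambda_{2,t}\phi(s,a) = \expect*{\nu(\theta)}{\gT Q_{\theta,t}(s,a)\tilde\theta_t(\theta)}$ for $\rho$-a.e.\ $(s,a)$. Multiplying both sides by $\sqrt{\rho(s,a)}$ and substituting the weighted identifications above --- together with $\overline{\gT Q}_\theta(s,a) = \sqrt{\rho(s,a)\nu(\theta)}\,\gT Q_\theta(s,a)$, which is the zero-IBE consequence of \Cref{eq:aug_rep} --- rewrites the FOC verbatim as $\Lambda_{2,t}\bar\phi(s,a) = \brk[a]{\overline{\gT Q}_\cdot(s,a), \bar\theta_t(\cdot)}$, the first identity in \Cref{eq:pm_obj}. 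The symmetric computation on $\gL_2$ produces the second identity. Because $\Lambda_{1,t}, \Lambda_{2,t} \succeq 0$, each pointwise FOC is both necessary and sufficient for the global unconstrained minimum of its respective quadratic, so a feasible pair $(\phi^*, \tilde\theta^*)$ satisfies the first two lines of \Cref{eq:pm_obj} iff it minimizes $\gL$ globally over $\gM^\rho_{\gS\times\gA} \times \gM^\nu_{\gB_\phi}$.

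The main obstacle is that orthogonality is an equality constraint on $\phi, \tilde\theta$, so a priori a constrained minimizer of $\gL_1$ would satisfy only a Lagrangian-perturbed FOC rather than the clean power-method identity. I would close this gap by observing that any orthogonal solution of the power-method system already attains the unconstrained infimum of the convex quadratic $\gL_1$ (and similarly $\gL_2$); hence the constrained and unconstrained infima coincide, forcing every constrained minimizer to obey the unperturbed FOC as well. Existence of such an orthogonal, feasible solution --- and therefore the fact that the two infima are equal --- is guaranteed by the spectral decomposition established in \Cref{apndx:thm_svd}, which exhibits the power-method identities as singular-value relations whose canonical solutions are orthogonal by construction. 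Combining the forward and reverse implications then yields the claimed equivalence.
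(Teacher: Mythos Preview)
Your core computation --- recognizing the decoupled structure $\gL = \gL_1(\phi) + \gL_2(\tilde\theta)$, taking pointwise first-order conditions, and matching them to the power-method identities of \Cref{eq:pm_obj} --- is correct and amounts to running the paper's argument in reverse. The paper instead starts from the least-squares form of the power-method linear systems, chooses the $\Lambda_{2,t}^{-1}$- and $\Lambda_{1,t}^{-1}$-weighted norms, and expands the squares to recover $\gL_1$ and $\gL_2$ up to additive constants. Your Gateaux-derivative route and their completing-the-square route establish the same algebraic identification under the same orthogonality constraints.

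Where you go further than the paper is in explicitly confronting the constraint: you correctly note that a constrained minimizer of a quadratic need only satisfy a Lagrangian-perturbed stationarity condition unless the constrained and unconstrained infima agree, and you propose to force this by invoking the spectral decomposition of \Cref{apndx:thm_svd}. That step does not go through as written. The theorem there characterizes the SVD of $\overline{\gT Q}$ under zero IBE in terms of the \emph{true} feature and parameter maps, whereas the iteration-$t$ system $\Lambda_{2,t}\bar\phi = \langle\overline{\gT Q}_\cdot, \bar\theta_t\rangle$ is built from an \emph{arbitrary iterate} $(\phi_t,\tilde\theta_t)$; its solution (unique when $\Lambda_{2,t}\succ 0$) has no a priori reason to land in $\gM^\rho_{\gS\times\gA}$. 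That said, the paper's own proof does not address this point at all --- it simply declares the least-squares objective ``equivalent'' to solving the power-method equations --- so your argument sits at the same level of rigor, and is in fact more transparent about where the implicit assumption (existence of an orthogonal feasible solution at each $t$) enters.
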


A proof is provided in \Cref{apndx:prop_pm_obj}. This equivalence means that by minimizing $\gL(\phi,\tilde{\theta})$ using gradient-based methods, we are effectively performing the updates of the power iteration algorithm. In practice, we solve an unconstrained relaxed version of \ref{eq:bellman_optimization} (see \cref{apndx:implementation} for details).

\vspace{-2mm}
\subsection{Comparison of the SBM Loss to the Bellman MSE}\label{subsec:mse_comp}
\vspace{-2mm}

To better understand the advantages of the \ref{eq:bellman_optimization} over the naive MSE objective (\Cref{eq:mse}), consider the following expansion of the MSE objective:
\begin{align*}
   &\min_{\phi,\tilde \theta} \expect*{(s,a) \sim \rho(s,a), \theta \sim \nu(\theta)}{\norm{\gT Q_\theta(s,a) - \phi(s,a)^\top\tilde \theta (\theta)}^2_2}  \\
    &= \min_{\phi,\tilde \theta} \expect*{(s,a) \sim \rho(s,a), \theta \sim \nu(\theta)}{C -2 \gT Q_\theta(s,a) \phi(s,a)^\top \tilde \theta(\theta) + \phi(s,a)^\top \tilde \theta(\theta) \tilde \theta(\theta)^\top \phi(s,a) } \\
    & \propto \min_{\phi,\tilde \theta} \expect*{(s,a) \sim \rho(s,a), \theta \sim \nu(\theta)}{  \norm{\phi(s,a)}_{\hat \Lambda}^2 -2 \gT Q_\theta(s,a) \phi(s,a)^\top \tilde \theta(\theta) },
\end{align*} 
where $C$ is a constant independent of $\phi$ and $\tilde{\theta}$, and $\hat \Lambda = \tilde{\theta}(\theta) \tilde{\theta}(\theta)^\top$ is a noisy, single-sample estimate of the parameter covariance. 
The SBM objective offers distinct advantages. First, its separation into $\gL_1(\phi)$ and $\gL_2(\tilde{\theta})$ enables alternating optimization (inherent to the power method), decomposing the joint problem into tractable subproblems and promoting stability over simultaneous MSE updates. Second, SBM's quadratic terms ($\|\phi(s,a)\|_{\Lambda_{2,t}}^2$, $\|\tilde{\theta}(\theta)\|_{\Lambda_{1,t}}^2$) use moving average covariance matrices $\Lambda_{2,t}, \Lambda_{1,t}$ from the previous iteration, offering robust, statistically grounded regularization. In contrast, MSE's quadratic term $\norm{\phi(s,a)}_{\hat \Lambda}^2$ uses the noisy, single-sample $\hat{\Lambda}$, lacking this stabilizing batch-averaged influence. Finally, SBM incorporates an explicit orthogonality regularizer $\gL_{orth}$. 

These differences demonstrate SBM's more refined approach to representation learning over direct MSE minimization. These fundamental differences highlight how SBM offers a more refined approach for representation learning than direct MSE minimization, which is also justified empirically in~\citep{guo2025representation}.

\subsection{Efficient Exploration using Thompson Sampling}
\label{sec:efficient_ts}

After covering spectral representation learning part, a learned representation can be leveraged to establish an efficient exploration method. In this work, we focus on Thompson Sampling (TS) \citep{osband2013more,osband2016deep,azizzadenesheli2018efficient, zanette2020provably}, which is particularly suited for settings with low-IBE representations. Indeed, given a dataset $\gD = \brk[c]{s_i,a_i, r_i, s'_i}_{i=1}^N$, a least-square estimator for an optimal parameter $\theta^* \in \gB_{\phi}$ (such that $\gT Q_{\theta^*} = Q_{\theta^*}$) can be found:
\begin{equation}
\label{eq:least_square}
\hat \theta_{LS} \in \argmax_{\theta \in \gB_{\phi}} \expect*{(s,a) \sim \mathcal{D}}{(\gT Q_{\theta}(s,a) - \phi(s,a)^\top \theta)^2},
\end{equation}
We conduct exploration following~\citep{zanette2020provably}. Specifically, the weights $\hat \theta_{TS}$ for the behavior policy $\pi_{\hat \theta_{TS}}(s) = \argmax_{a \in \gA} \phi(s,a)^\top \hat \theta_{TS}$, are sampled from a posterior distribution, often referred to as the uncertainty set. Given the representation, the weights are sampled from the posterior before each rollout according to:
\begin{equation}
\label{thompson_sampling}
  \hat \theta_{TS} \sim \gN (\hat \theta_{LS}, \sigma_{exp} \Sigma^{-1}); \quad \Sigma = \lambda I + \sum_{(s,a) \in \gD} \phi(s,a)\phi(s,a)^\top,
\end{equation}
for some positive $\lambda$.
TS provides an easy-to-implement method that add an exploration noise to the least-squre estimate $\hat \theta_{LS}$ (i.e., $\hat \theta_{TS} = \hat \theta_{LS} + \xi$), which yields the sampling procedure in \Cref{thompson_sampling} and reduce the maximum uncertainty in \cref{eq:max_unc}. We emphasize although we mainly consider the TS for exploration, the learned spectral Bellman representation is also compatible UCB with the same covariance $\Sigma$ in~\eqref{thompson_sampling}, as discussed in~\citep{zanette2020provably, modi2024model,zhang2022making}. 
Having established the representation learning and exploration framework, we next explore its practical integration into value-based reinforcement learning algorithms, using Q-learning \citep{sutton1998reinforcement} as an illustrative example due to its intrinsic connection with the optimal Bellman operator.


\begin{algorithm}[t!]
\caption{Q-Learning with SBM}
\begin{algorithmic}[1] 
\label{alg:spectral_q_learning}
\STATE Initialize Q-function parameters $\hat{\theta}_0$, features $\phi_0$, and replay buffer $\gD$.
\FOR{$t = 1,2, \hdots,$}
    \STATE \textbf{Data Collection with TS Exploration:}
    \STATE \quad Generate $\hat{\theta}_{TS} \sim \gN (\hat{\theta}_t, \sigma_{exp} \Sigma_t^{-1})$ using current $\hat{\theta}_t$ and $\Sigma_t$ from $\phi_t$ (\Cref{thompson_sampling}).
    \STATE \quad Collect data using policy $\pi_{\hat{\theta}_{TS}}(s) = \argmax_{a \in \gA} \phi_t(s,a)^\top \hat{\theta}_{TS}$, and add to $\gD$.
    \STATE \textbf{Policy Optimization:}
    \STATE \quad Update Q-function parameters: $\hat{\theta}_{t+1} = \argmin_{\theta} \mathcal{L}_{QL}(\theta; \phi_t)$ (\Cref{eq:approx_q_learning}), using $\gD$.
    \STATE \textbf{Representation Learning:}
    \STATE \quad Define $\nu_t(\theta) = \mathcal{N}(\hat{\theta}_{t+1}, \sigma_{rep}^2 I)$ and $\rho_t(s,a)$ as a probability over $\gD$ (e.g. uniform).
    \STATE \quad Update features: $\phi_{t+1} = \argmin_{\phi} \gL(\phi, \tilde{\theta}; \nu_t(\theta), \rho_t(s,a) )$ (\ref{eq:bellman_optimization}).
\ENDFOR
\end{algorithmic}
\vspace{-1mm} 
\end{algorithm}

\section{Q-Learning with SBM}
\label{sec:q_integration_revised}

Put the representation learning and RL together, we obtain \Cref{alg:spectral_q_learning}, which describes the implementation of the Spectral Bellman Method (SBM) using Q-learning and Thompson Sampling (TS) for exploration. The algorithm iteratively refines the policy and representation through three key phases: {\bf i)} data collection with TS exploration, {\bf ii)} policy optimization, and {\bf iii)} spectral representation learning. 

\Cref{alg:spectral_q_learning} begins by initializing the Q-function parameters $\hat{\theta}_0$, feature mapping $\phi_0$, and an empty replay buffer $\gD$ (Line 1). For every iteration $t$, the algorithm samples exploratory Q-function parameters $\hat{\theta}_{TS}$ (Lines 3-5) using the current Q-parameters $\hat{\theta}_t$ (as the mean for the TS posterior, akin to $\hat{\theta}_{LS}$ in \Cref{eq:least_square}) and the feature covariance $\Sigma_t$ (derived from the current features $\phi_t$ as per \Cref{thompson_sampling}). The agent then interacts with the environment using a policy $\pi_{\hat{\theta}_{TS}}$ that is greedy with respect to $Q_{\hat{\theta}_{TS}}(s,a) = \phi_t(s,a)^\top \hat{\theta}_{TS}$. The collected state-action-reward transitions are stored in the replay buffer $\gD$. This mechanism ensures that data collection is guided by the uncertainty captured in the learned feature space.
Following data collection, the Q-function parameters are updated from $\hat{\theta}_t$ to $\hat{\theta}_{t+1}$ (Lines 6-7). This update is achieved by minimizing the standard loss with Q-learning target:
\begin{align}
\label{eq:approx_q_learning}
    \mathcal{L}_{QL}(\theta; \phi) =\expect*{(s,a) \sim \gD}{(\gT Q_{\theta^-}(s,a) - \phi(s,a)^\top \theta)^2},
\end{align}
where $\theta^-$ are the target parameters. This loss utilize the current features $\phi_t$ and mini-batches of data sampled from the replay buffer $\gD$. This step refines the agent's policy based on the existing representation of state-action values.

Finally, in the final representation learning phase (Lines 8-10), the feature mapping is updated from $\phi_t$ to $\phi_{t+1}$ by minimizing the \ref{eq:bellman_optimization}. A critical aspect of this phase is the choice of the distribution over parameters, $\nu(\theta)$ and over state-actions $\rho(s,a)$ \cref{eq:aug_rep}. $\nu(\theta)$ is centered around the newly updated Q-parameters obtained from the policy optimization phase: $\nu(\theta) = \mathcal{N}(\hat{\theta}_{t+1}, \sigma_{rep}^2 I)$, where $\sigma_{rep}^2$ is a variance hyperparameter. The state-action distribution $\rho(s,a)$ for the SBM loss is implicitly defined by sampling transitions from the replay buffer $\gD$. This adaptive focus of $\nu(\theta)$, as visualized in \Cref{fig:theta_prob}, ensures that the feature learning process concentrates on satisfying the low-IBE condition in regions of the parameter space that are most relevant to the current policy. 

This alternating process of exploration, policy optimization, and representation learning allows the policy and the features to co-evolve. Improved features lead to more accurate value estimates and consequently better policies. In turn, an improved policy guides the representation learning towards more effective exploration.

\paragraph{Extension to Multi-Step Operators.}
The SBM framework naturally extends to $h$-step Bellman operators $\mathcal{T}^h$. A low one-step IBE implies a low $h$-step IBE (\Cref{apndx:multistep_algo}). Thus, we can apply the \ref{eq:bellman_optimization} by simply replacing the one-step backup $\mathcal{T}Q_{\theta,t}$ with a multi-step target, such as one from Retrace($\lambda$) \citep{munos2016safe}.

\begin{figure}[t!]
\centering
\includegraphics[trim={0cm 0cm 0cm 0cm}, clip, width=0.8\linewidth]{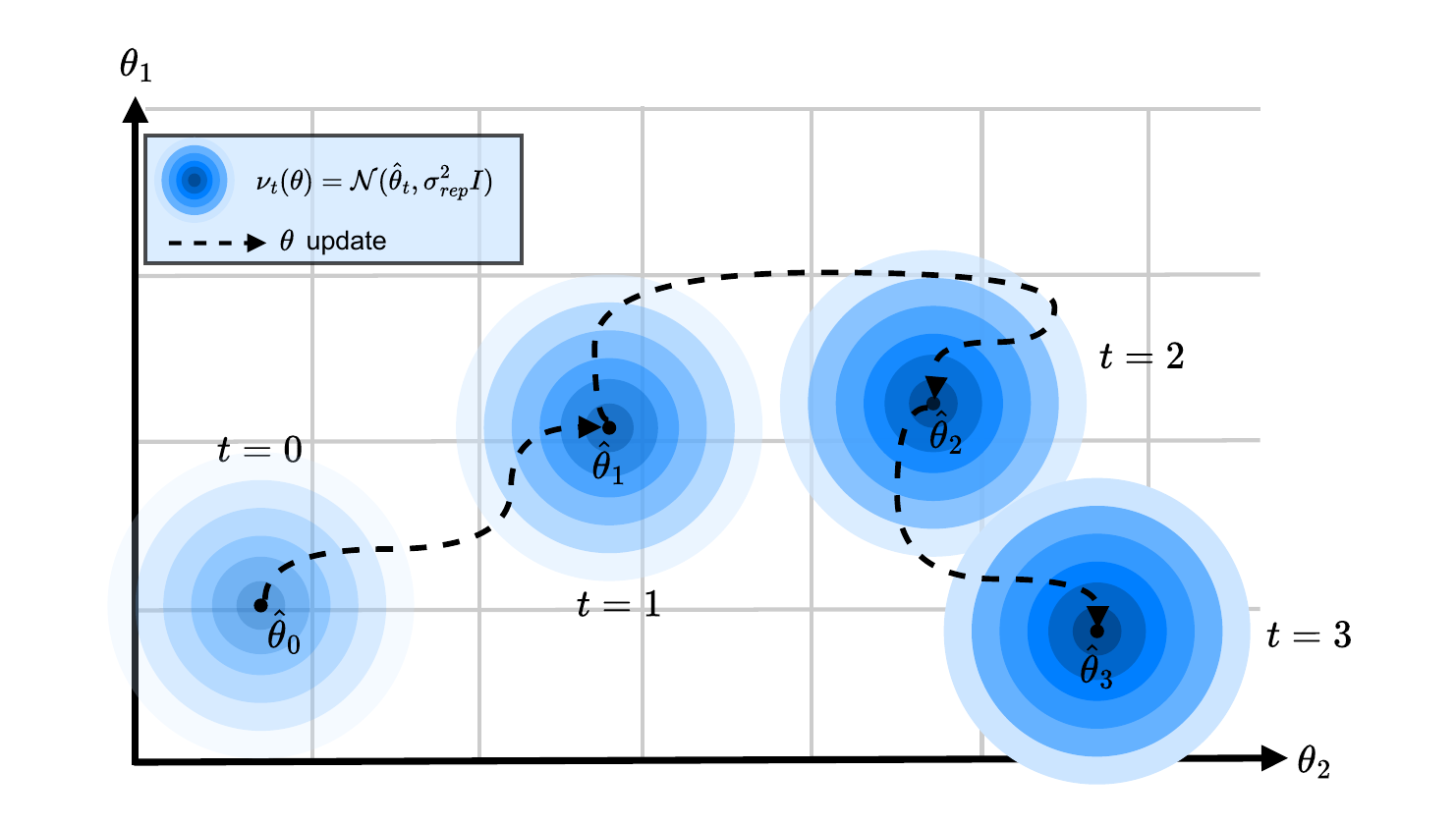}
 \caption{  Visualization of the parameter sampling distribution $\nu_t(\theta) = \gN(\hat \theta_t, \sigma_{rep}^2I)$ for $d=2$ over successive rounds. As Q-learning updates the mean $\hat \theta_t$, $\nu(\theta)$ shifts, focusing representation learning on parameters relevant to the current policy. Darker regions indicate higher probability density.}
 \label{fig:theta_prob}
\end{figure}


\section{Experiments}
\label{sec:exp}

We evaluate the effectiveness of incorporating Spectral Bellman Representation learning into standard deep RL algorithms. We focus on challenging benchmarks and compare against established baselines.

\subsection{Setting}

\textbf{Environments:} We use the Atari game suite (ALE, \citet{bellemare2013arcade}). We follow standard preprocessing steps, including frame stacking (4 frames), grayscale conversion, downsampling, and sticky actions unless specified otherwise. Performance is measured using mean and median Human Normalized Score (HNS) across all 55 games after 100 million environment steps.

To assess the impact of our method on exploration, we report aggregated HNS on a subset of  particularly challenging Atari games identified by \citet{badia2020agent57}. This subset includes games known for sparse rewards or requiring long-term credit assignment: \textit{Montezuma's Revenge}, \textit{Pitfall!}, \textit{Private Eye}, \textit{Skiing}, \textit{Freeway}, \textit{Solaris}, \textit{Venture}, \textit{BeamRider} and \textit{Pong}.  We denote this benchmark as Atari Explore.

\textbf{Implementation Details:}
Our spectral learning method is integrated into two representative deep RL agents: DQN \citep{mnih2013playing} and R2D2 \citep{kapturowski2018recurrent}.

We minimize the \cref{eq:bellman_optimization} using stochastic gradient descent. This is also true for the Q-learning phase. Therefore, SBM was actually tested in a fully online setting similar to DQN.
To satisfy the theoretical assumption $\|\phi(s,a)\|_2 \leq 1$ used in the background (\cref{sec:zero_ibe}) , we apply norm clipping to the output of the feature encoder network. We follow \Cref{alg:spectral_q_learning}, performing alternating updates between the Q-learning objective (\Cref{eq:approx_q_learning}) and the spectral representation objective (\ref{eq:bellman_optimization}). The distribution $\nu(\theta)$ is set to $\mathcal{N}(\hat \theta_{t+1}, \sigma_{rep}^2 I)$, as explained in \cref{sec:q_integration_revised}.
For specific hyperparameters and implementation details we refer the reader to \cref{apndx:implementation}. We compare standard $\epsilon$-greedy exploration with the TS approach described in \Cref{sec:efficient_ts}. As R2D2 operates in a distributed setting with multiple asynchronous actors, the precision matrix $\Sigma$ is shared across the actors while each of them sample $\hat \theta_{TS}$ before each epoch according to \cref{thompson_sampling}. For R2D2 experiments, the spectral objective is applied using a multi-step operator in the form of Retrace operator target  \citep{munos2016safe} in place of $\mathcal{T}Q_{\theta,t}$ (see \Cref{apndx:multistep_algo}).
We found that choosing $\sigma_{rep}$ is crucial for successful learning. We conducted a grid search and empirically found the best value to be $\sigma_{rep}^2 = 10^{-2}$.

In addition, in order to compare SBM to a Laplace based representation method, we compare SBM to PVN \cite{farebrother2023proto}. To fairly compare with SBM, we adapted PVN to our online learning setup. We alternate between policy optimization (DQN) and representation learning. The representation learning phase uses Random Network Indicators (RNI) to define auxiliary tasks, over the collected data, as in the original PVN. We used the same network architecture and latent dimension as our SBM experiments. We evaluated Online PVN with both $\epsilon$-greedy and TS. 
Lastly, we made an ablation study with a DQN backbone. We examine SBM with features extracted with naive MSE loss in \cref{eq:mse} and not with our suggested spectral representation. Furthermore, we investigated the specific case where $\nu_t(\theta) = \delta(\theta - \hat\theta_t)$ under the naive MSE loss. This configuration effectively corresponds to standard DQN training in \cref{eq:approx_q_learning} augmented with TS exploration, suggest by \cite{azizzadenesheli2018efficient}. We also examine the effect of the orthogonality regularization by training SBM without it. All runs are done over $10$ seeds.

\begin{table}[t!]
\centering
\begin{tabular}{ccccc} 
\specialrule{1.5pt}{1pt}{1pt}
      & \multicolumn{2}{c}{Atari ALE} & \multicolumn{2}{c}{Atari Explore} \\ \cline{2-5}
     Method                                     & Mean  & Median & Mean  & Median \\ \hline
DQN \citep{mnih2013playing}      & 1.62 $\pm$ 0.12 & 0.52  & 0.24  $\pm$ 0.03  & 0.11 \\
Online PVN ($\epsilon$-greedy)  & 1.72 $\pm$ 0.11 & 0.60& 0.26 $\pm$ 0.03 & 0.21  \\ 
Online PVN (TS)  & 1.84 $\pm$ 0.15 & 0.65 & 0.31 $\pm$ 0.04 & 0.23  \\ 
SBM + DQN ($\epsilon$-greedy).   & 1.80 $\pm$ 0.13    & 0.64  & 0.33 $\pm$ 0.03   & 0.23 \\
\textbf{SBM + DQN (TS) }  & \textbf{2.23} $\pm$ \textbf{0.19 }  & \textbf{0.85}  & \textbf{0.45} $\pm$ \textbf{0.05}   & \textbf{0.24}  \\ \hline
R2D2 \citep{kapturowski2018recurrent} & 3.21 $\pm$ 0.22  & 1.14  & 0.40 $\pm$ 0.06  & 0.22  \\
SBM + R2D2 ($\epsilon$-greedy) & 3.3 $\pm$ 0.24 & 1.14 & 0.45 $\pm$ 0.05  & 0.22  \\
\textbf{SBM + R2D2 (TS)}              & \textbf{3.53 $\pm$ 0.23}   & \textbf{1.37}  & \textbf{0.61 $\pm$ 0.03}  & \textbf{0.30}  \\ 
\specialrule{1.5pt}{1pt}{1pt}
\end{tabular}
\caption{ Aggregated Atari HNS at 100M steps. Our SBM method with TS is in bold.}
\label{table:results}
\end{table}

\begin{table}[t]
\centering
\begin{tabular}{ccccc} 
\specialrule{1.5pt}{1pt}{1pt}
       Method& Atari ALE &Atari Explore \\ \hline
Features from DQN Loss  \citep{azizzadenesheli2018efficient}    & 1.73 $\pm$ 0.14 & 0.30  $\pm$ 0.03   \\       
Features from Naive MSE Loss     & 1.82 $\pm$ 0.12  & 0.37  $\pm$ 0.04   \\
SBM w/o Orthogonality Reg.  & 2.11 $\pm$ 0.11 &  0.43 $\pm$ 0.05 \\ 
SBM Full  & 2.23 $\pm$ 0.19 & 0.45 $\pm$ 0.05  \\ 
\specialrule{1.5pt}{1pt}{1pt}
\end{tabular}
\vspace{-1mm} 
\caption{ Ablation study on the DQN backbone with TS exploration.}
\label{table:ablation}
\vspace{-1mm} 
\end{table}

\subsection{Main Results}
\vspace{-2mm}

\Cref{table:results} presents results comparing the baseline agents with their SBM counterparts, which incorporate spectral representation learning and exploration. \cref{fig:atari_graph} presents the results against environment steps over the training. Per game results are presented in \cref{tab:full_atari}.

\textbf{DQN Comparison:} We find SBM significantly outperforms vanilla DQN. Furthermore, combining spectral learning with TS exploration yields substantial gains, particularly on hard exploration tasks, suggesting that the learned representation effectively facilitates structured exploration. We observed that SBM underperforms compared to DQN in precision-control games such as Breakout. This is a characteristic side-effect of optimistic exploration strategies (like TS) in environments with 'narrow' optimal paths or instant-death conditions, where high-variance actions often lead to termination. This effect can be mitigated by annealing the exploration noise or switching to greedy evaluation once a sufficient performance threshold is reached.

\textbf{R2D2 Comparison:} We augment the R2D2 agent, which utilizes recurrent networks and Retrace targets, with our spectral representation learning objective applied to the Retrace target. R2D2 with SBM demonstrates improved performance over the baseline R2D2, with the most significant improvement observed when combined with TS, particularly on the Atari Explore subset.

\textbf{Online PVN Comparison:}
While Online PVN improves over DQN, SBM with TS performs better, especially on Atari Explore. This is likely because Online PVN learns reward-agnostic features using successor measure representation, capturing state-action structure on various MDPs, whereas SBM learns reward-dependent features by enforcing closure under the optimal Bellman operator. SBM's approach appears better for maximizing online rewards, while PVN's may be more suited for adapting to varied reward tasks.



\textbf{Ablation Study:} The results are presented in \cref{table:ablation}. SBM-learned features yielded significantly better results compared to features learned by minimizing the naive MSE loss, highlighting the advantage of the spectral objective over direct Bellman error minimization. While limiting representation learning to the current policy (effectively the DQN objective) combined with TS exploration improves performance over vanilla $\epsilon$-greedy DQN, it causes significant degradation compared to SBM or even the full Naive MSE loss. This indicates the critical importance of learning representations over a distribution of value functions rather than solely fitting the current policy. Additionally, performance without the orthogonality regularizer was slightly lower, suggesting that while beneficial, strict orthogonality is not the primary driver of SBM's gains.
The computational overhead of SBM is relatively small, resulting in a $\sim 20\%$ increase in compute time per 1M steps. This overhead primarily stems from training the $\tilde{\theta}$ network and computing the inverse covariance for Thompson Sampling. Notably, the core representation learning cost mirrors vanilla baselines, while the Q-learning phase remains highly efficient by updating only linear weights.

\begin{figure}[t]
\centering
\includegraphics[trim={0cm 0cm 0cm 0cm}, clip, width=0.9\linewidth]{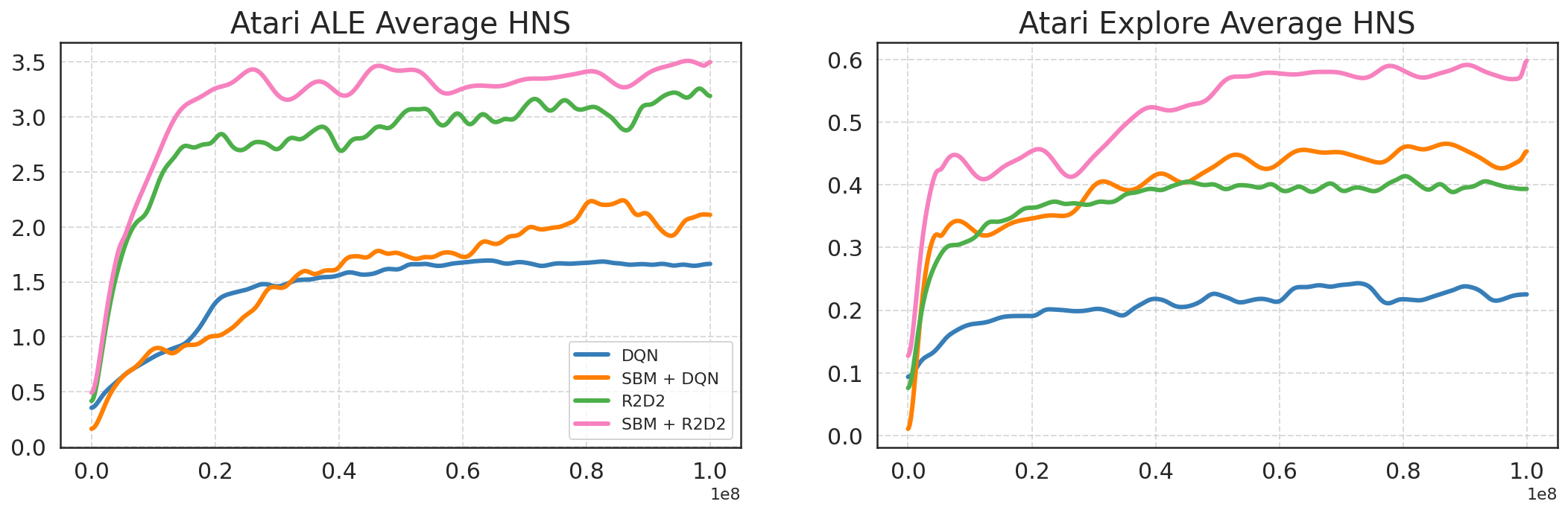}
 \caption{Average HNS over 100M steps. DQN and R2D2 against their SBM counterparts with TS across Atari ALE benchmark (left) and on the hard-exploration subset (right).}
 
 \label{fig:atari_graph}
\end{figure}

\vspace{-3.0mm}
\section{Related Work}
\label{sec:related_work}

Our work builds upon and distinguishes itself from several key areas in reinforcement learning.

\textbf{Representation Learning in RL:} Many auxiliary objectives, such as contrastive learning \citep{laskin2020curl, schwarzer2021data} and autoencoders \citep{kingma2019introduction}, don't explicitly optimize for Bellman compatibility. Laplace-based methods, including Successor Features \citep{mahadevan2005proto, barreto2017successor, touati2021learning, farebrother2023proto}, often aim for task-agnostic, generalizable representations, potentially less potent for task-specific optimization as done in our work. Directly learning linear representations for the optimal Q-function can be sample inefficient under misspecification \citep{du2019good, azizzadenesheli2018efficient}. In contrast, SBM, derives its objective from structural properties under low IBE \citep{zanette2020learning}, targeting Bellman consistency across the function space, which makes the IBE-based model-free representation learning~\citep{modi2024model} eventually practical. 


\textbf{Linear MDPs and IBE:} Linear MDPs \citep{jin2020provably, yang2020reinforcement} assume linear transitions and rewards in features, a practically challenging condition \citep{agarwal2020flambe, zhang2022making, ren2022spectral}. The IBE \citep{zanette2020learning} relaxes this, measuring if the function space $\mathcal{Q}_\phi$ is nearly closed under the Bellman operator ($\mathcal{T}Q_\theta \approx Q_{\tilde{\theta}}$). Low IBE, a more direct target than full MDP linearization, ensures theoretical guarantees; e.g., \citet{zanette2020learning} presented a planning-based algorithm achieving $\tilde{\mathcal{O}}(dH^{1.5}\sqrt{T} + \sqrt{d}HT\gI_\phi)$ regret under low-IBE settings. While practical methods approximate linear MDPs by combining representation learning for RL with some dynamics assumptions~\citep{zhang2022making, ren2022spectral, NEURIPS2024_c6d8301d,fujimoto2025towards}, our work is inspired by the $\gI_\phi = 0$ condition's structural implications, avoiding full Linear MDP decomposition which can be resource-intensive and require impractical latent dimensions with heuristic nonlinear correction in \citep{fujimoto2025towards}.

\textbf{Bellman Error/Residual Minimization:} Traditional value-based RL minimizes the Bellman error via methods like gradient TD \citep{sutton2009fast}, residual algorithms \citep{baird1995residual}, and target networks \citep{mnih2013playing}. Distributional RL \citep{bellemare2017distributional} matches return distributions. Unlike these, our method doesn't directly minimize the residual for a single $\theta$, but leverages structural relationships (specifically covariance alignment) emerging from assuming zero residual across a function distribution induced by $\nu(\theta)$.

\textbf{Exploration Strategies:} In contrast to count-based methods \citep{bellemare2016unifying} and intrinsic motivation \citep{pathak2017curiosity}, TS offers a Bayesian uncertainty approach, effective in linear settings \citep{agrawal2013thompson, azizzadenesheli2018efficient}. Notably, \citet{zanette2020provably} analyzes TS for RL with low-IBE representations. Our contribution is synergistic: our spectral objective learns suitable representations $\phi$, and TS naturally utilizes the resulting feature covariance $\Sigma_t$ for more directed exploration.


\section{Conclusion}
\label{sec:discussion}

We introduced Spectral Bellman Method (SBM), a theoretically-motivated method for spectral representation learning and efficient TS based exploration. Our spectral objective derived from the zero-IBE condition allows for theoretically-grounded representation. We show that the SBM is compatible with multi-step operators such as Retrace. Experiments on Atari demonstrate consistent improvements over baselines, especially on hard-exploration games and when applied to advanced agents (R2D2), validating our approach.

A key limitation includes the sensitivity of SBM over the parameter distribution, which needs to be tuned carefully. Also, a broader empirical validation beyond Atari would be beneficial (e.g., continuous control tasks). Future work may focus on strengthening the theoretical analysis (convergence, non-zero IBE), developing improved algorithms (better approximations of $\tilde{\theta}$, adaptive parameter sampling), and extending the framework to other settings (e.g., distributional or actor-critic methods).

In conclusion, SBM offers a promising avenue for representation learning and exploration, potentially leading to more efficient, stable, and exploratory RL agents, by explicitly optimizing features for Bellman consistency across the function space.

\newpage
{
\small
\bibliography{main}
\bibliographystyle{iclr2026_conference}
}


\newpage
\appendix

\section{The Bellman Operator Spectral Decomposition Theorem }
\label{apndx:thm_svd}
We start by incorporating distributions over states/actions and parameters, let $\rho(s,a)$ be a distribution over $\gS \times \gA$. In the finite case, let $P_{s,a}$ be a diagonal matrix with $\sqrt{\mathbb{P}(\kappa(i))}$ on the diagonal. Similarly, let $\nu_{\theta}$ represent weighting by $\sqrt{\nu(\theta)}$ (conceptually, an operator or a diagonal matrix if $\gB_\phi$ and $m$ are finite). We define the augmented feature and parameter matrices as $\Phi_P = P_{s,a} \Phi$ and $\tilde{\Theta}_P = \tilde{\Theta} P_{\theta}$.
The distribution-augmented Bellman matrix becomes:
$$
\overline{\gT Q} := P_{s,a} (\gT Q) P_{\theta} = P_{s,a} \Phi \tilde{\Theta} P_{\theta} = \Phi_P \tilde{\Theta}_P.
$$

The following theorem highlights a structural connection between the (weighted) feature matrix, (weighted) parameter matrix and the singular matrix of the Bellman matrix.

\begin{theorem}[Bellman Operator Spectral Decomposition]
\label{thm:svd_bellman}
Let $\phi : \gS \times \gA \rightarrow \R^d$ be a feature map with $\gI_\phi = 0$. Let the feature covariance matrix under $\rho(s,a)$ be
$$
\Lambda := \expect*{(s,a) \sim \rho(s,a)}{\phi(s,a) \phi(s,a)^\top} = \Phi_P^\top \Phi_P,
$$
where $\Lambda = \diag{\{\lambda_i\}_{i=1}^d}$. Also, let the parameter covariance also be a diagonal matrix:
$$
M = \expect*{\theta \sim \nu(\theta)}{\tilde\theta(\theta) \tilde\theta(\theta)^\top} = \tilde{\Theta}_P\tilde\Theta_P^\top,
$$
with $M = \diag{\{\mu\}_{i=1}^d}$.

Consider the SVD of the augmented Bellman matrix $\overline{\gT Q} = U \Sigma V^\top$, then the SVD components satisfy:
\begin{enumerate}
    \item The non-zero singular values in $\Sigma$ (up to $d$) are related to the values of $\Lambda$ and $M$. Specifically, $\Sigma$ has the form $\diag{\sqrt{\lambda_1\mu_1}, ..., \sqrt{\lambda_d\mu_d}, 0, ...}$ up to permutation. Let $\Sigma_{d} = \text{diag}(\sigma_{1}, \dots, \sigma_{d})$ where $\sigma_{i} = \sqrt{\lambda_{i}m_{i}}.$
    \item The corresponding left singular vectors (first $d$ columns of $U$, denoted $\tilde{U}$) and right singular vectors (first $d$ columns of $V$, denoted $\tilde{V}$) satisfy:
    $$
    \Phi_P = \tilde{U} \Sigma_\phi  \quad \text{and} \quad \tilde \Theta_P =   \Sigma_\theta\tilde{V}^\top,
    $$
    where  $\Sigma_{\phi} \Sigma_{\theta} = \Sigma$, $\Sigma_{\phi}^2 = \Lambda_1$, and $\Sigma_{\theta}^2 = M$.
\end{enumerate}
\end{theorem}
\begin{proof}[Proof]
Under the assumption of zero Bellman error, we have that
$$
\overline{\gT Q}^\top \overline{\gT Q} =  \tilde \Theta_P^\top \Phi_P^\top \Phi_P  \tilde \Theta_P = V \Sigma^\top \Sigma V^\top 
$$
where the last equality follows from the SVD of $\overline{\gT Q}$. Given that $\Phi_P^\top \Phi_P = \Lambda$ we have:
$$
\tilde \Theta_P^\top \Lambda \tilde \Theta_P = V \Sigma^\top \Sigma V^\top.
$$
Since we assume $M = \tilde{\Theta}_{P}\tilde{\Theta}_{P}^{\top}$ is diagonal, the rows of $\tilde{\Theta}_{P}$ are orthogonal. This implies that $\tilde{\Theta}_{P}$ can be factored as $\tilde{\Theta}_{P} = \Lambda_{2}^{1/2}\tilde{V}^{\top}$, where $\tilde{V} \in \mathbb{R}^{m \times d}$ has orthonormal columns. Substituting this into the expression:
$$(M^{1/2}\tilde{V}^{\top})^{\top}\Lambda(M^{1/2}\tilde{V}^{\top}) = \tilde{V}M^{1/2}\Lambda M^{1/2}\tilde{V}^{\top}$$
Since $\Lambda$ and $M^{1/2}$ are diagonal, they commute. Thus:
$$
\tilde{V}\Lambda M\tilde{V}^\top = V\Sigma^\top \Sigma V^\top
$$
Comparing both sides, we identify that the singular values are $\Sigma^\top \Sigma = \Lambda M$ (implying the singular values are $\sqrt{\lambda_{i}m_{i}}$) and the right singular vectors correspond to $\tilde{V}$.

Similarly, for the left singular vectors, we have:
$$\overline{\mathcal{T}Q}\overline{\mathcal{T}Q}^{\top} = \Phi_{P}\tilde{\Theta}_{P}\tilde{\Theta}_{P}^{\top}\Phi_{P}^{\top} = U\Sigma\Sigma^{\top}U^{\top}.$$
Using $\tilde{\Theta}_{P}\tilde{\Theta}_{P}^{\top} = M$ and the factorization $\Phi_{P} = \tilde{U}\Lambda^{1/2}$ (implied by $\Phi_{P}^{\top}\Phi_{P} = \Lambda$ and orthogonality of U):
$$
(\tilde{U}\Lambda^{1/2})M(\tilde{U}\Lambda^{1/2})^{\top} = \tilde{U}(\Lambda M)\tilde{U}^{\top}.
$$

This confirms that $U\Sigma_{d} = \tilde{U}(\Lambda M)^{1/2} = \tilde{U}\Lambda^{1/2}M^{1/2}$. Rearranging for the feature matrix $\Phi_{P}$ using the derived singular values yields the final relation:$$\Phi_{P} = \tilde{U}\Sigma_{d}M^{-1/2} = \tilde{U}\Lambda^{1/2},$$ which completes the proof.
\end{proof}

\section{Proof for \Cref{prop:inner_prod}}
\label{apndx:prop_pm}
\begin{proof}

\begin{align*}
\brk[a]{\overline{\gT Q}_\theta(\cdot), \bar \phi(\cdot)}_{\rho(s,a)} &= \int \bar \phi(s,a)  \overline{\gT Q}_\theta(s,a) dsda \\
&= \int  \phi(s,a) \phi(s,a)^\top \bar{\theta}(\theta)   \rho(s,a) dsda \\
&=  \int  \phi(s,a) \phi(s,a)^\top \rho(s,a) dsda \bar{\theta}(\theta) \\
&= \Lambda_1 \tilde \bar(\theta). 
\end{align*}

\begin{align*}
\brk[a]{\overline{\gT Q}_\cdot(s,a), \bar \theta(\cdot)}_{\nu(\theta)} &= \int \bar{\theta}(\theta)  \overline{\gT Q}_\theta(s,a) d\theta \\
&= \int \tilde{\theta}(\theta) 
 \bar \phi(s,a)^\top \tilde{\theta}(\theta)  \nu(\theta) d\theta \\
&= \int \tilde{\theta}(\theta) \tilde{\theta}(\theta)^\top \nu(\theta) d\theta \bar \phi(s,a) \\
&= \Lambda_2 \bar{\phi}(s,a) . 
\end{align*}
\end{proof}




\section{Proof for \cref{prop:pm_objective}}
\label{apndx:prop_pm_obj}
\begin{proof}
The equivalent objective for the $t$-th iteration of the power method is given by
\begin{equation}\begin{aligned}
    \label{optimization_power_iter}
    &\min_{\phi, \tilde \theta} \int \| \Lambda_{2,t}  \phi(s,a) - \brk[a]{{\gT Q}_{\cdot}(s,a),  \tilde \theta_t(\cdot)}\|^2  dsda + \int  \| \Lambda_{1,t} \tilde \theta(\theta) - \brk[a]{{\gT Q}_\theta(\cdot),  \phi_t(\cdot)}\|^2 d\theta \\
    &\text{s.t} \quad \expect*{\rho(s,a)}{\phi_i(s,a) \phi_j(s,a)} = \expect*{\nu(\theta)}{\tilde \theta_i(\theta) \tilde \theta_j(\theta)} = 0 \quad \forall i \neq j \in [d] 
\end{aligned}\end{equation}
The choice of the norm is arbitrary. Therefore, we can choose the norm induced by $\Lambda_{1,t}^{-1}$ and $\Lambda_{2,t}^{-1}$ for the left and right terms respectively

\begin{align*}
   & \int \bigg \| \Lambda_{2,t} \bar \phi(s,a) - \int \overline{\gT Q}_{\theta}(s,a)\bar\theta_{t}(\theta) d\theta \bigg\|_{\Lambda_{2,t}^{-1}}^2  dsda \\
   &= \int \brk[c]{C_1 + \bar \phi(s,a)^\top \Lambda_{2,t} \bar \phi(s,a) - 2 \bar \phi(s,a)^\top \int \overline{\gT Q}_{\theta}(s,a)\bar\theta_t(\theta) d\theta} dsda \\
   & = \int C_1 \; dsda + \expect*{\rho(s,a)}{\phi(s,a)^\top \Lambda_{2,t} \phi(s,a)} -2 \expect*{\nu(\theta) \rho(s,a)}{\gT Q_{\theta,t}(s,a) \phi(s,a)^\top  \tilde \theta_{t}(\theta) }
\label{eq_left}
\end{align*}
where $C_1$ is a term that does not depend on $\phi$. The same practice can be used for the right term in Eq~\ref{optimization_power_iter}
\begin{align*}
    \expect*{\nu(\theta)}{\tilde \theta(\theta)^\top \Lambda_{1,t} \tilde \theta(\theta)} - 2 \expect*{\nu(\theta)\rho(s,a)}{\gT Q_{\theta,t}(s,a) \tilde \theta(\theta) ^\top \phi_t(s,a)} +C_2 
\end{align*}
where $C_2$ is a constant which does not depend on $\tilde \theta$. Therefore, the overall equivalent objective is
\begin{align*}
    \gL(\phi,\theta;\nu,\rho) = \gL_1(\phi) + \gL_2(\tilde \theta) \quad s.t \quad \phi \in \gM_{\gS \times \gA}^\rho, \tilde \theta \in \gM_{\gB_\phi}^\nu
\end{align*}

where
$$
\gL_1(\phi) = \expect*{\rho(s,a)}{\phi(s,a)^\top \Lambda_{2,t} \phi(s,a)} -2 \expect*{\nu(\theta) \rho(s,a)}{\gT Q_{\theta,t}(s,a) \phi(s,a)^\top  \tilde \theta_{t}(\theta) }.
$$
$$
\gL_2(\tilde \theta) = \expect*{\nu(\theta)}{\tilde \theta(\theta)^\top \Lambda_{1,t} \tilde \theta(\theta)} - 2 \expect*{\nu(\theta)\rho(s,a)}{\gT Q_{\theta,t}(s,a) \tilde \theta(\theta) ^\top \phi_t(s,a)}.
$$


\end{proof}
For the practical unconstrained version of \ref{eq:bellman_optimization} see \cref{apndx:implementation}.


\section{Multi-Step Algorithms}
\label{apndx:multistep_algo}

\subsection{The \texorpdfstring{$h$}{h}-Step Optimal Bellman Operator}

While the analysis in \Cref{sec:spectral} focused on the standard one-step Bellman operator $\mathcal{T}$, the spectral representation learning framework can be naturally extended to multi-step operators. Multi-step methods are widely used in modern RL agents as they often accelerate learning by propagating information more rapidly through trajectories \citep{sutton1998reinforcement, pohlen2018observe, schrittwieser2020mastering}.
Consider the $h$-step optimal Bellman operator, defined recursively:
\begin{equation}
\mathcal{T}^h Q(s,a) = \mathcal{T}(\mathcal{T}^{h-1}Q)(s,a) = r(s,a) + \gamma \expect*{s' \sim P(\cdot|s,a)}{\max_{a' \in \mathcal{A}} \mathcal{T}^{h-1} Q(s',a')}.
\end{equation}
The optimal Q-function $Q^*$ is also the fixed point of $\mathcal{T}^h$ for any $h \ge 1$ \citep{efroni2018beyond}. Furthermore, $\mathcal{T}^h$ is known to be a contraction mapping with factor $\gamma^h$ in the max-norm \citep{efroni2018beyond}, which is potentially beneficial for stability. Evaluating $\mathcal{T}^h Q$ generally requires access to a model of the environment, either through a simulator or a learned model \citep{schrittwieser2020mastering}, often approximated using planning techniques like Monte Carlo Tree Search (MCTS).

We can define the inherent Bellman error with respect to this $h$-step operator:
\begin{equation}
 \mathcal{I}_\phi^h :=  \sup_{\theta \in \mathcal{B}} \inf_{\tilde{\theta} \in \mathcal{B}} \norm{ \mathcal{T}^h Q_{\theta} - Q_{\tilde{\theta}} }_{\infty}.
\end{equation}
Crucially, if the function space $\mathcal{Q}_\phi$ is closed under the one-step operator $\mathcal{T}$, it is also closed under $\mathcal{T}^h$.

\begin{proposition}[$h$-step IBE Bound]
\label{prop:h_step_ibe}
    Consider a feature map $\phi$. If the one-step IBE is $\mathcal{I}_\phi$, then the $h$-step IBE satisfies $\mathcal{I}_\phi^h \le \sum_{i=0}^{h-1} \gamma^i \mathcal{I}_\phi \le \frac{1}{1-\gamma} \mathcal{I}_\phi$. In particular,  $\mathcal{I}_\phi = 0$ iff $\mathcal{I}_\phi^h = 0$.
\end{proposition}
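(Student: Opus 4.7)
The plan is to establish the inequality by iterating the one-step IBE guarantee $h$ times, using the $\gamma$-contraction of $\mathcal{T}$ in $\|\cdot\|_\infty$ to discount the errors accumulated at earlier iterations. Intuitively, composing $h$ one-step approximations accumulates an error of $\sum_{i=0}^{h-1}\gamma^i \mathcal{I}_\phi$, because an error introduced at step $k$ subsequently passes through $h-k-1$ further contractive applications of $\mathcal{T}$, picking up a factor of $\gamma^{h-k-1}$.

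Concretely, I would fix an arbitrary $\theta_0 \in \mathcal{B}$ and an arbitrary $\epsilon > 0$, and inductively construct a chain $\theta_1, \ldots, \theta_h \in \mathcal{B}$ with $\|\mathcal{T}Q_{\theta_k} - Q_{\theta_{k+1}}\|_\infty \le \mathcal{I}_\phi + \epsilon$ for each $k$; such $\theta_{k+1}$ exists by the infimum in the definition of $\mathcal{I}_\phi$, with the $\epsilon$-slack included only in case that infimum is not attained. An induction on $k$ would then yield $\|\mathcal{T}^k Q_{\theta_0} - Q_{\theta_k}\|_\infty \le \sum_{i=0}^{k-1}\gamma^i (\mathcal{I}_\phi+\epsilon)$: the inductive step expands $\mathcal{T}^{k+1}Q_{\theta_0} - Q_{\theta_{k+1}}$ as the telescoping sum $(\mathcal{T}(\mathcal{T}^k Q_{\theta_0}) - \mathcal{T}Q_{\theta_k}) + (\mathcal{T}Q_{\theta_k} - Q_{\theta_{k+1}})$, applies the triangle inequality, and uses the $\gamma$-contraction on the first summand together with the one-step IBE bound on the second. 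Setting $k = h$, noting that $\theta_h \in \mathcal{B}$ is a valid choice for the infimum appearing in the $h$-step IBE, then taking the supremum over $\theta_0$ and letting $\epsilon \downarrow 0$, gives $\mathcal{I}_\phi^h \le \sum_{i=0}^{h-1}\gamma^i \mathcal{I}_\phi$. The second inequality $\sum_{i=0}^{h-1}\gamma^i \mathcal{I}_\phi \le \mathcal{I}_\phi/(1-\gamma)$ is then just a geometric series estimate.

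For the ``in particular'' claim, the forward direction $\mathcal{I}_\phi = 0 \Rightarrow \mathcal{I}_\phi^h = 0$ is an immediate corollary of the bound. The reverse direction does not follow from the (one-sided) inequality and would be argued separately, for instance by exploiting the factorization $\mathcal{T}^h = \mathcal{T} \circ \mathcal{T}^{h-1}$ together with the closure assumption $\mathcal{T}^h(\mathcal{Q}_\phi) \subseteq \mathcal{Q}_\phi$ to deduce closure under the single-step $\mathcal{T}$. I expect the main obstacle to lie precisely in this reverse implication: $h$-step closure of a function class under a contractive operator does not in general imply one-step closure, so this step requires careful use of additional structure of $\mathcal{T}$ or of the linear class $\mathcal{Q}_\phi$ beyond what the contractive induction provides. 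The induction itself is essentially mechanical bookkeeping once the contraction and IBE definition are unpacked.
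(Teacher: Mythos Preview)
Your proposal is correct and follows essentially the same approach as the paper's proof: iterating the one-step IBE guarantee via the triangle inequality and the $\gamma$-contraction of $\mathcal{T}$ to accumulate the geometric error $\sum_{i=0}^{h-1}\gamma^i\mathcal{I}_\phi$. You are in fact slightly more careful than the paper in two respects --- you introduce an $\epsilon$-slack in case the infimum is not attained, and you correctly flag that the reverse implication $\mathcal{I}_\phi^h = 0 \Rightarrow \mathcal{I}_\phi = 0$ does not follow from the one-sided bound and would require a separate argument, a point the paper's own proof does not address.
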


\begin{proof}
\label{proof:h_step_ibe}
Using the definition $\mathcal{I}_\phi = \sup_Q \inf_{\tilde{Q}} \|\mathcal{T} Q - \tilde{Q}\|_\infty$, there exists $\tilde{Q}_1$ such that $\|\mathcal{T} Q - \tilde{Q}_1\|_\infty \le \mathcal{I}_\phi$. Similarly, there exists $\tilde{Q}_2$ such that $\|\mathcal{T} \tilde{Q}_1 - \tilde{Q}_2\|_\infty \le \mathcal{I}_\phi$. Then, using the triangle inequality and the $\gamma$-contraction property of $\mathcal{T}$:
\begin{align*}
\|\mathcal{T}^2 Q - \tilde{Q}_2\|_\infty &= \|\mathcal{T}(\mathcal{T} Q) - \mathcal{T} \tilde{Q}_1 + \mathcal{T} \tilde{Q}_1 - \tilde{Q}_2\|_\infty \\
&\le \|\mathcal{T}(\mathcal{T} Q) - \mathcal{T} \tilde{Q}_1\|_\infty + \|\mathcal{T} \tilde{Q}_1 - \tilde{Q}_2\|_\infty \\
&\le \gamma \|\mathcal{T} Q - \tilde{Q}_1\|_\infty + \mathcal{I}_\phi \le \gamma \mathcal{I}_\phi + \mathcal{I}_\phi.
\end{align*}
Repeating this argument $h$ times yields the bound $\mathcal{I}_\phi^h \le \sum_{i=0}^{h-1} \gamma^i \mathcal{I}_\phi$.
\end{proof}

This proposition implies that if features $\phi$ yield low (or zero) one-step IBE, they also yield low (or zero) $h$-step IBE. Therefore, the spectral representation learning objective in \ref{eq:bellman_optimization} can be directly applied by replacing the one-step operator $\gT Q_{\theta,t}$ with its $h$-step counterpart $\gT^h Q_{\theta,t}$ (or its approximation, e.g., via MCTS). This encourages learning features that linearly represent the outcome of $h$-step lookahead planning.

\subsection{Practical Multi-Step Targets}

While $\mathcal{T}^h$ offers theoretical appeal, practical deep RL algorithms often employ sampled multi-step targets derived from actual trajectories, such as $n$-step Q-learning targets or Retrace \citep{munos2016safe}. Let $Q_\theta(s,a) = \phi(s,a)^\top \theta$ and $\pi_\theta(s) = \argmax_a Q_\theta(s,a)$ (such that $\gT^{\pi_\theta}Q_\theta = \gT Q_\theta)$. A generic $n$-step target involves bootstrapping off $Q_\theta$ after $n$ steps.

However, using such targets within our spectral learning framework, which involves sampling different $\theta \sim \nu(\theta)$ poses a challenge in off-policy settings. The target policy $\pi_\theta$ associated with each sampled $\theta$ will likely differ significantly from the behavior policy $\mu$ used to generate the data in the replay buffer $\mathcal{D}$. This discrepancy can lead to high variance or instability, especially for longer multi-step returns.

To mitigate this, off-policy correction techniques like Retrace($\lambda$) \citep{munos2016safe} are crucial. Retrace provides a return target that mixes $n$-step returns and function approximation values, using importance sampling ratios $c_k = \beta \min\brk[c]{1, \frac{\pi_\theta(a_k|s_k)}{\mu(a_k|s_k))}}$:
\begin{equation}
\gR^\beta Q_\theta(s_t, a_t) = Q_\theta(s_t, a_t) + \expect*{\mu}{ \sum_{k=t}^\infty \gamma^{k-t} \left( \prod_{j=t+1}^k c_j \right) \delta_k},
\end{equation}
where $\delta_k = \gT^{\pi_\theta} Q_\theta(s_k, a_k) - Q_\theta(s_k, a_k)$ is the TD error. In practice, the used target policy $\pi_\theta^{\epsilon_t}$ is an $\epsilon$-greedy version of $\pi_\theta$  with $\epsilon_t$ and $t \rightarrow 0$ along the training process such that $\| \gT^{\pi_\theta^{\epsilon_t}}Q_\theta -\gT Q_\theta \|_\infty \leq \epsilon_t \|Q_\theta \|_\infty $ as suggested by \citep{munos2016safe}.

Furthermore, to handle varying reward scales across different environments (e.g., in Atari), value transformations are often applied \citep{pohlen2018observe}. A common transformation is $f(x) = \text{sign}(x)(\sqrt{|x| + 1} - 1) + \epsilon x$ for small $\epsilon$. The target operator becomes $\gR^\beta_f Q= f(\gR^\beta f^{-1}(Q))$.

While the strict condition $\mathcal{I}_\phi = 0$ does not guarantee that $\mathcal{R}^\beta Q_\theta$ lies within $\mathcal{Q}_\phi$, we can still hypothesize that learning features $\phi$ to linearly represent these practical, information-rich targets is beneficial. We can heuristically apply \ref{eq:bellman_optimization} by replacing $\mathcal{T} Q_{\theta,t}$ with $\mathcal{R}^\beta Q_{\theta,t}$.
This encourages $\phi$ to capture structure relevant to these robust, multi-step, off-policy corrected targets used in high-performing agents like R2D2 \citep{kapturowski2018recurrent}, even if the direct IBE connection is relaxed.

\section{Implementation Details: SBM}
\label{apndx:implementation}
\subsection{Unconstrained Loss for \ref{eq:bellman_optimization}}
The unconstrained objective (the Lagrangian) of \ref{eq:bellman_optimization}  is:
\begin{equation*}
    \gL(\phi,\theta;\nu,\rho) = \gL_1(\phi) + \gL_2(\tilde \theta) +\gL_{orth}(\phi,\tilde \theta),
\end{equation*}

$\gL_{orth}(\phi,\tilde \theta) = \expect*{\theta \sim \nu(\theta)}{\sum_{i \neq j \in [d]} \lambda_{i,j} \tilde \theta_i(\theta) \tilde \theta_j(\theta)} + \expect*{(s,a) \sim \rho(s,a)}{\sum_{i \neq j \in [d]} \mu_{i,j} \phi_i(s,a)\phi_j(s,a)}$ is an orthogonality regularizer for a set of positive Lagrange multipliers $\brk[c]{\lambda_{i,j}, \mu_{i,j}}_{i \neq j \in [d]}$.

If $\phi^*, \tilde \theta^*$ is a solution to the primal problem in \cref{optimization_power_iter}, then there exist set of positive Lagrange multipliers $\brk[c]{\lambda^*_{i,j}, \mu^*_{i,j}}_{i \neq j \in [d]}$ such that $\phi^*, \tilde \theta^*$ satisfies the KKT conditions: (1) $\nabla \gL(\phi^*,\theta^*;\nu,\rho) = 0$ and (2) $\Phi$ and $\tilde \Theta$ are orthogonal (Theorem 2.1 in \cite{wright2006numerical}). 

$\gL_{orth}(\phi,\tilde \theta)$ can be further relaxed into:
$$\gL_{orth}(\phi,\tilde \theta) = \sum_{i \neq j \in [d]} \lambda_{i,j} \brk{\expect*{\nu(\theta)}{ \tilde \theta_i(\theta) \tilde \theta_j(\theta)}}^2 + \sum_{i \neq j \in [d]}\mu_{i,j}\brk{\expect*{\rho(s,a)}{ \phi_i(s,a)\phi_j(s,a)}}^2,$$
which can be simplified to the one proposed by \cite{wu2018laplacian}:
\begin{align*}
    \gL_{orth}(\phi,\tilde \theta) =& \expect*{\theta, \theta' \sim \nu(\theta)}{\sum_{i \neq j \in [d]} \tilde \theta_i(\theta) \tilde \theta_j(\theta)\tilde \theta_i(\theta') \tilde \theta_j(\theta')} \\& + \expect*{(s,a),(s',a') \sim \rho(s,a)}{\sum_{i \neq j \in [d]} \phi_i(s,a)\phi_j(s,a) \phi_i(s',a')\phi_j(s',a')}.
\end{align*}

\subsection{Implementation Details}
\label{apndx:implement_detail_subsec}

\textbf{Implementation of $\tilde \theta$ in \ref{eq:bellman_optimization}.} The update for $\tilde{\theta}$ (implicitly or explicitly parameterized) within the spectral representation learning phase is crucial. One could approximate $\tilde{\theta}(\theta)$ by explicit parameters (reset each time new $\theta$ is sampled) or by a parameterized function (e.g. a neural network). In this work, we implement $\tilde \theta$ as a residual network: $\tilde \theta(\theta) = \theta + \Delta(\theta)$, where $\Delta(\theta)$ is a trainable MLP model. 

\textbf{Moving Average of $\Lambda$.} Practically, \ref{eq:bellman_optimization} is designed to be optimized in mini-batch iterative manner. Therefore, for better stability, we suggest using exponential moving average (EMA) updates: $\Lambda_{1, t+1} = \alpha \expect*{\rho(s,a)}{\phi_t(s,a) \phi_t(s,a)^\top}  + (1-\alpha) \Lambda_{1,t} $ (same for $\Lambda_2$) for some $\beta \in [0,1]$.  This is a solution for the constrained objective:
$$
\Lambda_{1,t+1} \in \argmin_\Lambda \frac{1}{2} \norm{\Lambda - \expect*{\rho(s,a)}{\phi_t(s,a) \phi_t(s,a)^\top}}_2^2  + \frac{\eta}{2} \norm{\Lambda - \Lambda_{1,t}}_2^2,
$$
where $\eta > 0$ and $\alpha = \frac{1}{1 + \eta}$.

\textbf{Network Architecture.} For the DQN architecture, we follow the architecture suggested by \cite{mnih2013playing} for $\phi(s,a)$ where the last layer's output dimension was changed to $d \cdot |\gA|$. The output is reshaped into a $|\gA| \times d$ matrix such that for any input state $s$, we get $\phi(s,a)$ for each action $a \in \gA$. The network in the residual model $\tilde \theta$ is a 3-layer MLP with dimensions of $2d\rightarrow 2d \rightarrow d$. 

For R2D2 architecture, we follow the ResNet architecture from \cite{espeholt2018impala} for the feature extraction from the visual observations $e_t = g(o_t)$. We use an LSTM (put a reference to LSTM here) network to process the sequence of observations. The LSTM head outputs a hidden state $m_t$ and a state $c_t$: $m_t, c_t = f(e_t, m_{t-1})$. The output feature $\phi(h,a)$ is done by using an MLP with output dimension of $|\gA| \times d$, reshaped into a $|\gA| \times d$ matrix such that for any input history $h_t=(o_t,o_{t-1},\dots o_0)$, we get $\phi(h_t,a)$ for each $a \in \gA$. $\tilde \theta$ with the same architecture in the DQN case.

\textbf{Evaluation Parameters.} Each experiment ran for 10 different seeds. During evaluation, we ran the policy 10 times. 

\textbf{Retrace Operator.} We used the transformed retrace operator $\gR^\beta_f$ in our R2D2 experiments as target.

\subsection{Hyperparameters}
\label{apndx:hyper}

\begin{table}[h!]
\centering
\caption{Hyperparameters for SBM with DQN}
\begin{tabular}{|c|c|}
\hline
$\gamma$                    & 0.99                   \\ \hline
Latent dimension $d$                    & 256                   \\ \hline
Learning rate                    & $3 \times 10^{-4}$                    \\ \hline
Enviornment steps                & 100M                         \\ \hline
Batch size                       & 256                         \\ \hline
Number of representation learning steps                      & 512                         \\ \hline
Number of policy optimization steps                      & 64                         \\ \hline
Optimizer                        & Adam  \citep{kingma2014adam} \\ \hline
$\sigma_{exp}$                       & $\lambda_{min}(\Sigma)/d(1-\gamma)$                           \\ \hline
$\sigma_{rep}^2$                 & $10^{-2}$                        \\ \hline
$\lambda$ & $0.1$                        \\ \hline
Reward clipping value & 1.0                        \\ \hline
EMA parameter $\alpha$ & 0.1                       \\ \hline
\end{tabular}
\end{table}

\begin{table}[h!]
\centering
\caption{Hyperparameters for SBM with R2D2}
\begin{tabular}{|c|c|}
\hline
$\gamma$                    & 0.997                 \\ \hline
Latent dimension $d$                    & 256                   \\ \hline
Learning rate                    & $3 \times 10^{-4}$                    \\ \hline
Enviornment steps                & 100M                         \\ \hline
Batch size                       & 64                         \\ \hline
Number of representation learning steps                      & 512                         \\ \hline
Number of policy optimization steps                      & 64                         \\ \hline
Number of policy optimization steps                      & 64                         \\ \hline
Burn-in steps \citep{kapturowski2018recurrent}                      & 40                         \\ \hline
Trajectory training (max) length \citep{kapturowski2018recurrent}                      & 120                         \\ \hline
Retrace parameter $\beta$                    & 0.95                         \\ \hline
Optimizer                        & Adam  \citep{kingma2014adam} \\ \hline
$\sigma_{exp}$                       & $\lambda_{min}(\Sigma)/d(1-\gamma)$                           \\ \hline
$\sigma_{rep}^2$                 & $10^{-2}$                        \\ \hline
$\lambda$ & 0.1                      \\ \hline
EMA parameter $\alpha$ & 0.1                       \\ \hline
\end{tabular}
\end{table}

\newpage
\section{Atari Score Table}

\begin{table}[htbp]
  \centering
  \label{tab:full_atari}
  \scriptsize
  
\begin{tabular}{ccccccc}
\toprule
Game & Human & Random & DQN & SBM+DQN & R2D2 & R2D2+SBM \\
\midrule
Alien & 7127.70 & 227.80 & 1999.79 $\pm$ 257.86 & 1237.28 $\pm$ 108.98 & 6792.85 $\pm$ 295.58 & \textbf{8389.99 $\pm$ 384.19} \\
Amidar & 1719.50 & 5.80 & 433.66 $\pm$ 69.68 & 1159.18 $\pm$ 71.13 & 1575.45 $\pm$ 80.54 & \textbf{1596.91 $\pm$ 64.33} \\
Assault & 742.00 & 222.40 & 847.49 $\pm$ 54.91 & 1788.34 $\pm$ 61.37 & \textbf{2070.25 $\pm$ 285.82} & 2068.98 $\pm$ 362.52 \\
Asterix & 8503.30 & 210.00 & 6167.47 $\pm$ 1180.64 & \textbf{8681.20 $\pm$ 1182.53} & 6224.40 $\pm$ 615.35 & 7547.14 $\pm$ 438.34 \\
Asteroids & 47388.70 & 719.10 & 626.35 $\pm$ 74.15 & 1137.84 $\pm$ 69.64 & 1704.73 $\pm$ 193.96 & \textbf{2246.42 $\pm$ 159.39} \\
Atlantis & 29028.10 & 12850.00 & 497156.27 $\pm$ 4941.23 & 328266.40 $\pm$ 73334.77 & 903872.67 $\pm$ 97353.82 & \textbf{959381.33 $\pm$ 124777.47} \\
Bankheist & 753.10 & 14.20 & 320.85 $\pm$ 15.96 & \textbf{1136.08 $\pm$ 86.93} & 831.13 $\pm$ 34.91 & 983.09 $\pm$ 38.44 \\
Battlezone & 37187.50 & 2360.00 & 17066.67 $\pm$ 1426.73 & 26840.00 $\pm$ 1594.43 & 53993.33 $\pm$ 3267.35 & \textbf{65191.69 $\pm$ 3059.30} \\
Beamrider & 16926.50 & 363.90 & 2341.12 $\pm$ 384.89 & \textbf{5684.80 $\pm$ 411.47} & 4116.11 $\pm$ 404.47 & 5531.92 $\pm$ 299.30 \\
Berzerk & 2630.40 & 123.70 & 299.52 $\pm$ 36.49 & 746.67 $\pm$ 10.03 & \textbf{897.26 $\pm$ 92.56} & 841.47 $\pm$ 105.35 \\
Bowling & 160.70 & 23.10 & 21.72 $\pm$ 1.77 & 11.35 $\pm$ 2.58 & 228.65 $\pm$ 1.78 & \textbf{282.34 $\pm$ 1.29} \\
Boxing & 12.10 & 0.10 & 56.62 $\pm$ 1.04 & 95.24 $\pm$ 0.62 & 85.84 $\pm$ 1.17 & \textbf{100.00 $\pm$ 0.00} \\
Breakout & 30.50 & 1.70 & 74.07 $\pm$ 14.16 & 32.47 $\pm$ 2.82 & \textbf{197.29 $\pm$ 32.99} & 183.10 $\pm$ 23.84 \\
Centipede & 12017.00 & 2090.90 & 2429.27 $\pm$ 545.55 & 3164.83 $\pm$ 208.75 & 15324.34 $\pm$ 1810.83 & \textbf{16902.07 $\pm$ 1891.82} \\
Choppercommand & 7387.80 & 811.00 & 610.13 $\pm$ 91.47 & 915.20 $\pm$ 62.60 & 1753.27 $\pm$ 213.60 & \textbf{1950.43 $\pm$ 180.47} \\
Crazyclimber & 35829.40 & 10780.50 & 58781.87 $\pm$ 4431.60 & 91828.00 $\pm$ 7361.21 & 107756.13 $\pm$ 4010.05 & \textbf{136573.43 $\pm$ 3712.86} \\
Demonattack & 1971.00 & 152.10 & 4431.15 $\pm$ 562.47 & \textbf{8895.04 $\pm$ 712.73} & 2748.50 $\pm$ 397.09 & 2691.05 $\pm$ 468.96 \\
Doubledunk & -16.40 & -18.60 & -2.24 $\pm$ 1.45 & -2.53 $\pm$ 1.87 & 7.04 $\pm$ 2.21 & \textbf{8.77 $\pm$ 2.55} \\
Enduro & 860.50 & 0.00 & 638.51 $\pm$ 41.65 & \textbf{2313.70 $\pm$ 131.03} & 1811.51 $\pm$ 47.32 & 2305.42 $\pm$ 40.05 \\
Fishingderby & -38.70 & -91.70 & 3.20 $\pm$ 2.76 & -21.67 $\pm$ 4.62 & \textbf{35.73 $\pm$ 2.75} & 33.77 $\pm$ 2.69 \\
Freeway & 29.60 & 0.00 & 19.07 $\pm$ 0.21 & 31.29 $\pm$ 0.30 & 30.82 $\pm$ 0.07 & \textbf{33.00 $\pm$ 0.14} \\
Frostbite & 4334.70 & 65.20 & 2585.60 $\pm$ 278.19 & 535.04 $\pm$ 117.30 & 7726.51 $\pm$ 220.78 & \textbf{8734.95 $\pm$ 211.13} \\
Gopher & 2412.50 & 257.60 & 3546.45 $\pm$ 694.18 & \textbf{21192.16 $\pm$ 2595.47} & 9775.83 $\pm$ 2535.30 & 11532.03 $\pm$ 2724.36 \\
Gravitar & 3351.40 & 173.00 & 160.00 $\pm$ 49.53 & 242.00 $\pm$ 56.17 & \textbf{3370.03 $\pm$ 226.38} & 3316.33 $\pm$ 195.74 \\
Hero & 30826.40 & 1027.00 & 16431.15 $\pm$ 1016.03 & 21358.04 $\pm$ 1150.31 & 25955.32 $\pm$ 31.69 & \textbf{29876.39 $\pm$ 34.87} \\
Icehockey & 0.90 & -11.20 & -4.88 $\pm$ 0.69 & -5.43 $\pm$ 1.13 & 6.49 $\pm$ 1.71 & \textbf{6.80 $\pm$ 1.34} \\
Jamesbond & 302.80 & 29.00 & 445.87 $\pm$ 40.24 & 646.80 $\pm$ 40.03 & 721.93 $\pm$ 78.90 & \textbf{734.88 $\pm$ 88.68} \\
Kangaroo & 3035.00 & 52.00 & 2999.47 $\pm$ 333.14 & 7840.80 $\pm$ 680.31 & 11265.80 $\pm$ 478.40 & \textbf{12907.43 $\pm$ 561.41} \\
Krull & 2665.50 & 1598.00 & 4504.32 $\pm$ 207.00 & 11408.32 $\pm$ 248.90 & 21415.33 $\pm$ 3964.59 & \textbf{23311.90 $\pm$ 3284.52} \\
Kungfumaster & 22736.30 & 258.50 & 3136.00 $\pm$ 843.01 & 19676.80 $\pm$ 1179.52 & 34325.20 $\pm$ 1823.20 & \textbf{43780.17 $\pm$ 1773.79} \\
Montezumarevenge & 4753.30 & 0.00 & 0.00 $\pm$ 0.00 & 642.40 $\pm$ 113.34 & 879.67 $\pm$ 219.29 & \textbf{1176.24 $\pm$ 181.95} \\
Mspacman & 6951.60 & 307.30 & 2289.07 $\pm$ 296.70 & 2578.40 $\pm$ 186.92 & 6960.41 $\pm$ 353.38 & \textbf{8120.56 $\pm$ 378.52} \\
Namethisgame & 8049.00 & 2292.30 & \textbf{5047.89 $\pm$ 687.45} & 4769.60 $\pm$ 397.33 & 4038.58 $\pm$ 263.41 & 4894.12 $\pm$ 341.02 \\
Phoenix & 7242.60 & 761.40 & 4967.68 $\pm$ 753.66 & \textbf{6285.84 $\pm$ 153.60} & 4347.37 $\pm$ 355.99 & 3987.71 $\pm$ 444.14 \\
Pitfall & 6463.70 & -229.40 & -59.76 $\pm$ 10.22 & -14.03 $\pm$ 8.35 & -5.22 $\pm$ 2.51 & \textbf{-3.69 $\pm$ 3.23} \\
Pong & 14.60 & -20.70 & 10.28 $\pm$ 0.76 & 19.87 $\pm$ 0.22 & 18.62 $\pm$ 0.14 & \textbf{20.47 $\pm$ 0.18} \\
Privateeye & 69571.30 & 24.90 & -547.84 $\pm$ 84.07 & -181.95 $\pm$ 90.23 & 38212.54 $\pm$ 1274.77 & \textbf{51369.59 $\pm$ 1178.57} \\
Qbert & 13455.00 & 163.90 & 5128.53 $\pm$ 978.90 & 8241.20 $\pm$ 1016.85 & \textbf{20062.47 $\pm$ 835.18} & 18471.19 $\pm$ 703.67 \\
Riverraid & 17118.00 & 1338.50 & 6721.28 $\pm$ 510.13 & \textbf{12111.44 $\pm$ 712.36} & 7401.33 $\pm$ 308.99 & 9564.36 $\pm$ 282.00 \\
Roadrunner & 7845.00 & 11.50 & 33083.73 $\pm$ 1722.53 & \textbf{46015.20 $\pm$ 2269.45} & 42994.47 $\pm$ 2159.38 & 41576.14 $\pm$ 2726.37 \\
Robotank & 11.90 & 2.20 & 27.99 $\pm$ 5.36 & 17.86 $\pm$ 2.72 & 60.61 $\pm$ 1.81 & \textbf{62.75 $\pm$ 1.47} \\
Seaquest & 42054.70 & 68.40 & 2896.64 $\pm$ 363.70 & 2261.60 $\pm$ 209.35 & \textbf{3343.95 $\pm$ 195.17} & 3156.72 $\pm$ 234.17 \\
Skiing & -4336.90 & -17098.10 & -15384.64 $\pm$ 278.75 & \textbf{-15214.91 $\pm$ 1750.84} & -27300.00 $\pm$ 0.00 & -17024.68 $\pm$ 0.00 \\
Solaris & 12326.70 & 1236.30 & 1680.21 $\pm$ 421.35 & 3806.07 $\pm$ 44.73 & 3110.99 $\pm$ 378.79 & \textbf{4061.02 $\pm$ 491.34} \\
Spaceinvaders & 1668.70 & 148.00 & 1108.05 $\pm$ 267.36 & \textbf{1904.32 $\pm$ 138.75} & 1585.83 $\pm$ 198.53 & 1433.51 $\pm$ 226.71 \\
Stargunner & 10250.00 & 664.00 & \textbf{33361.07 $\pm$ 2218.71} & 6512.00 $\pm$ 849.11 & 2723.93 $\pm$ 548.09 & 2621.41 $\pm$ 690.72 \\
Tennis & -8.30 & -23.80 & 11.69 $\pm$ 0.47 & 13.31 $\pm$ 0.25 & 12.44 $\pm$ 2.66 & \textbf{14.26 $\pm$ 1.87} \\
Timepilot & 5229.20 & 3568.00 & 5230.93 $\pm$ 337.32 & 7594.40 $\pm$ 321.73 & 8032.27 $\pm$ 499.41 & \textbf{9550.81 $\pm$ 547.06} \\
Tutankham & 167.60 & 11.40 & 124.80 $\pm$ 7.70 & \textbf{189.64 $\pm$ 10.34} & 112.05 $\pm$ 18.01 & 134.66 $\pm$ 19.94 \\
Upndown & 11693.20 & 533.40 & 4451.84 $\pm$ 816.43 & 10883.84 $\pm$ 913.45 & 77144.95 $\pm$ 12594.93 & \textbf{99513.17 $\pm$ 9181.75} \\
Venture & 1187.50 & 0.00 & 439.47 $\pm$ 115.21 & 1182.97 $\pm$ 19.43 & 1365.00 $\pm$ 45.61 & \textbf{1949.20 $\pm$ 39.73} \\
Videopinball & 17667.90 & 0.00 & 97079.34 $\pm$ 22379.44 & \textbf{310351.10 $\pm$ 53324.81} & 16346.69 $\pm$ 2432.50 & 14943.88 $\pm$ 1760.91 \\
Wizardofwor & 4756.50 & 563.50 & 955.73 $\pm$ 453.05 & 2129.60 $\pm$ 136.21 & \textbf{10283.00 $\pm$ 1960.67} & 9889.49 $\pm$ 2038.26 \\
Yarsrevenge & 54576.90 & 3092.90 & 25562.24 $\pm$ 3743.63 & 27827.71 $\pm$ 2219.92 & 60559.04 $\pm$ 3913.49 & \textbf{73644.13 $\pm$ 3133.18} \\
Zaxxon & 9173.30 & 32.50 & 5619.20 $\pm$ 466.03 & 3238.40 $\pm$ 384.70 & 13134.33 $\pm$ 1342.95 & \textbf{17187.17 $\pm$ 1224.38} \\
\bottomrule
\end{tabular}
\end{table}

\end{document}